\documentclass{article}

\usepackage{microtype}
\usepackage{graphicx}
\usepackage{subcaption}
\usepackage{booktabs} 
\usepackage{cancel}
\usepackage{hyperref}

\usepackage[accepted]{icml2026}

\usepackage{amsmath}
\usepackage{amssymb}
\usepackage{mathtools}
\usepackage{amsthm}
\usepackage{bm}

\usepackage[capitalize,noabbrev]{cleveref}

\theoremstyle{plain}
\newtheorem{theorem}{Theorem}[section]
\newtheorem{proposition}[theorem]{Proposition}
\newtheorem{lemma}[theorem]{Lemma}
\newtheorem{corollary}[theorem]{Corollary}
\theoremstyle{definition}
\newtheorem{definition}[theorem]{Definition}

\theoremstyle{remark}
\newtheorem{remark}[theorem]{Remark}

\usepackage[textsize=tiny]{todonotes}

\usepackage[most]{tcolorbox}
\usepackage{xcolor}
\definecolor{acadblue}{RGB}{235, 240, 250} 
\tcbset{
    blue_style/.style={
        colback=acadblue,       
        colframe=black,         
        boxrule=0.5pt,          
        arc=1pt,                
        auto outer arc,
        left=10pt, right=10pt, top=5pt, bottom=5pt, 
        enhanced,
        before skip=10pt,       
        after skip=10pt         
    }
}

\icmltitlerunning{Exact Functional ANOVA Decomposition for Categorical Inputs Models}

\begin{document}

\twocolumn[
\icmltitle{Exact Functional ANOVA Decomposition for Categorical Inputs Models}

\icmlsetsymbol{corresponding}{$\ast$}
\icmlsetsymbol{supervising}{$\dagger$}

\begin{icmlauthorlist}
  \icmlauthor{Baptiste Ferrere}{corresponding,a,b}
  \icmlauthor{Nicolas Bousquet}{supervising,a,d}
  \icmlauthor{Fabrice Gamboa}{supervising,b,e,f}
  \icmlauthor{Jean-Michel Loubes}{supervising,b,c,e}
  \icmlauthor{Joseph Muré}{supervising,a}
\end{icmlauthorlist}

\icmlaffiliation{a}{EDF R\&D, SINCLAIR Lab}
\icmlaffiliation{b}{Université de Toulouse}
\icmlaffiliation{c}{INRIA Regalia}
\icmlaffiliation{d}{Sorbonne Université}
\icmlaffiliation{e}{ANITI}
\icmlaffiliation{f}{Facultad de Ingeníera Universidad de Medellín}

\icmlcorrespondingauthor{Baptiste Ferrere}{baptiste.ferrere@edf.fr}

\icmlkeywords{trustworthy machine learning, interpretability, functional ANOVA decomposition, categorical data, discrete fourier analysis}

  \vskip 0.3in
]

\printAffiliationsAndNotice{$^{\ast}$Corresponding author. $^\dagger$Equal supervising.}

\begin{abstract} Functional ANOVA offers a principled framework for interpretability by decomposing a model’s prediction into main effects and higher-order interactions. For independent features, this decomposition is well-defined, strongly linked with SHAP values, and serves as a cornerstone of additive explainability. However, the lack of an explicit closed-form expression for general dependent distributions has forced practitioners to rely on costly sampling-based approximations. We completely resolve this limitation for categorical inputs. By bridging functional analysis with the extension of discrete Fourier analysis, we derive a closed-form decomposition without any assumption. Our formulation is computationally very efficient. It seamlessly recovers the classical independent case and extends to arbitrary dependence structures, including distributions with non-rectangular support. Furthermore, leveraging the intrinsic link between SHAP and ANOVA under independence, our framework yields a natural generalization of SHAP values for the general categorical setting. We provide a basic \href{https://github.com/BapFr/Exact-Functional-ANOVA-Decomposition-for-Categorical-Inputs-Models}{Python implementation} of our method.
\end{abstract}

\section{Introduction}

Model explanation is essential to understand, compare, and validate the mechanisms underlying predictive performance. It can be local or global and may aim either at importance attribution, such as Shapley Additive exPlanations (SHAP values, \citet{lundberg2017unified}), or at the 
identification of the effects and interactions governing model behavior \cite{ilidrissi:tel-04674771}. It traditionally faces three challenges: ensuring a theoretically grounded explanation, controlling the complexity of interactions, and guaranteeing alignment between model behavior, data structure, and the produced explanation. Building on these observations, several principled frameworks to characterize latent factors and interactions have been proposed \citep{khemakhem2020variational,liang2023quantifying}. In the post-processing context of black-box models (\emph{post-hoc explainability}), functional ANOVA offers such decomposition framework with broad relevance to these challenges. \\
More specifically, functional ANOVA was initially introduced by \citet{hoeffding_class_1948} for independent random variables then generalized by \citet{stone1994use} and later by \citet{hooker_2007} for dependent random variables,  providing a global additive decomposition that non-redundantly separates effects and interactions. Finally, \citet{chastaing_generalized_2012} then \citet{IlIdrissi2023} worked on the existence of this decomposition for unbounded support while preserving its uniqueness. 

Recent work aims to make such additive explanations computationally tractable at scale in specific settings.  In particular, for binary inputs, the sparse Fourier representation proposed by \citet{gorji2024shap} based on the Boolean analysis \citep{o2014analysis} enables fast and \emph{exact} SHAP values computation. Their \texttt{FourierSHAP} values are however restricted to the \emph{Interventional} \citep{janzing2020feature,van2022tractability} or \emph{Baseline} \citep{sundararajan2020many} Shapley values. In the context of functional ANOVA decomposition, Boolean analysis cannot be applied directly for two major reasons. First, computing the Fourier decomposition of pseudo-Boolean functions is equivalent to performing an ANOVA decomposition on functions with \textit{i.i.d.} Bernoulli inputs of parameter $1/2$, a condition that is rarely met in practice. Second, in the categorical case, any one-hot encoding of complex categorical inputs leads to the study of fictitious interactions between the \emph{new} binary variables, due to the corresponding encoding. Consequently, one cannot perform \emph{brute-force} boolean Fourier analysis to perform functional ANOVA decomposition. 

In a more general input setting, but restricted to tree ensembles, \citet{lengerich2020purifying} introduced the \texttt{Purifying} algorithm to estimate the functional ANOVA decomposition, but this method requires the density of the data. Later, \citet{benard2025tree} proposed the \texttt{TreeHFD} algorithm which directly estimates the decomposition from a data sample. However, this approach is computationally limited to shallow trees and assumes non-empty leaves, as is standard for this class of models \cite{Kairgeldin2024}, which precludes accounting for sparsity in the data.

To leverage the properties of this generalized Fourier decomposition more broadly in practical applications, a computationally tractable formulation is desirable. This is particularly relevant when inputs are categorical and may be dependent. Categorical variables are ubiquitous in tabular data and require consequential representation choices \cite{matteucci2023benchmark}, which typically induce high cardinality and out-of-vocabulary modalities  \cite{cheng2025tabfsbench}. Moreover, studying categorical-input settings is a natural stepping stone toward continuous inputs in modern tabular models, since competitive approaches unify feature processing through an embedding/tokenization stage for both categorical and numerical attributes, and show that even continuous variables benefit from vector encodings before the backbone \cite{NEURIPS2021_9d86d83f}.\\
This setting is precisely the focus of this work. We explicitly construct this generalized decomposition in a computationally tractable manner, and show that it can handle high-dimensional inputs without sacrificing the need to account for dependencies among them. We provide theoretical guarantees and extensive experiments. Our contributions are summarized below.




\begin{figure*}[t]
    \centering
    \includegraphics[width=1.0\textwidth]{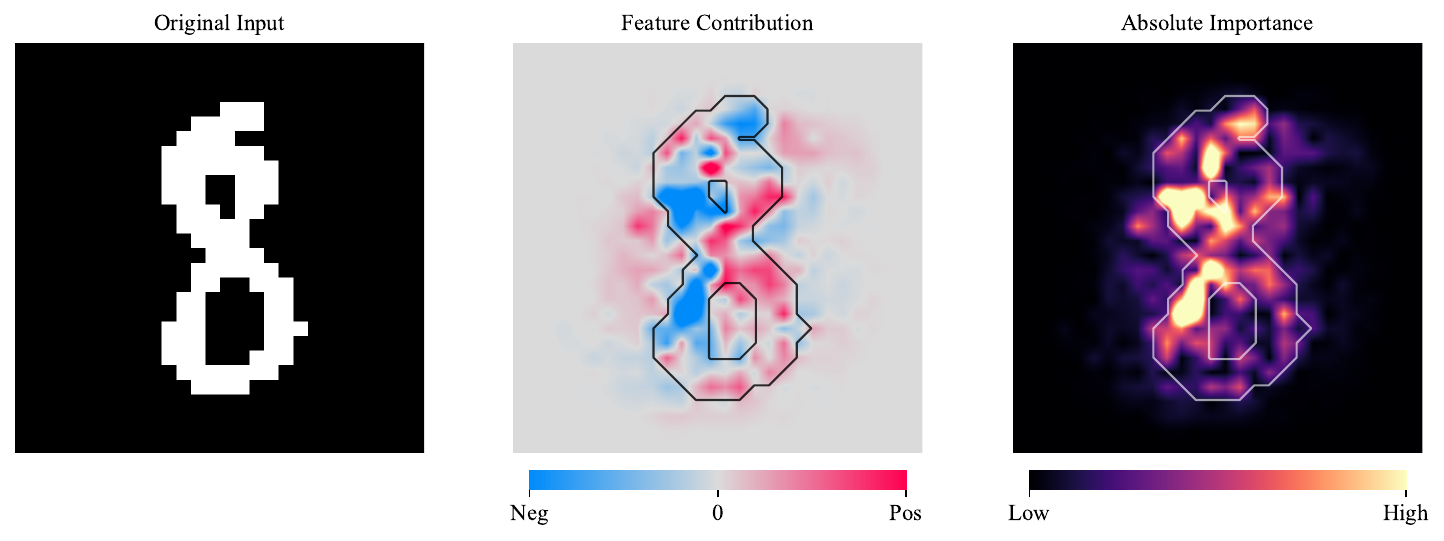}
    \caption{\textbf{ANOVA-Based Shapley Values on the Binarized MNIST Dataset.} 
    We apply our framework on a MLP trained on binarized MNIST, encoded as a tabular dataset of shape $(60\,000 , 784)$. The attribution targets the predicted probability of the specific class `3', defined as $f(\mathbf{x}) \coloneqq \mathbb{P}( MLP(\mathbf{x}) = 3 )$. 
    \textbf{(Left)} The original input sample (digit `8'). 
    \textbf{(Middle)} Signed local attributions: red pixels increase the probability of the target class `3', while blue pixels decrease it. 
    \textbf{(Right)} Absolute attribution magnitudes. We observe a behavior consistent with the nature of the digits. The pixels on the right side, which overlap with the shape of a `3', positively contribute to the target probability (red). Conversely, the pixels on the left side close the loops, acting as the distinguishing features between an `8' and a `3'; these pixels correctly penalize the target probability (blue), allowing the model to rule out class `3'.}
    \label{fig:mnist_analysis}
\end{figure*}

\paragraph{Our contributions.} We introduce a closed-form formulation for the Generalized Functional ANOVA on categorical domains. To the best of our knowledge, this constitutes a novel theoretical advance: it provides an exact additive decomposition, valid for any arbitrary dependence structures and sparse empirical supports, where standard methods typically fail or rely on approximations. We demonstrate that this general framework naturally recovers established results, such as orthogonal ANOVA and SHAP values, in the independent setting. Furthermore, we exploit the vectorized structure of our framework and empirical sparsity of tabular categorical data to derive efficient model explanation. A visual example is provided on Fig \ref{fig:mnist_analysis}. This framework paves the way for a unified, theoretically-grounded and computationally tractable local and global explainability.

\section{Background}

\paragraph{Notations.} Let $d$ be a positive integer and denote $[d] \coloneqq \{1, \dots, d\}$. We consider a random input vector $\mathbf{X} \coloneqq (\mathbf{X}_1, \dots, \mathbf{X}_d)$ taking values in a support $\mathcal{X} \subseteq \mathbb{R}^d$, with probability density function $p$ defined with respect to a measure $\nu$. For any subset of indices $A \subseteq [d]$, let $\mathbf{X}_A \coloneqq (\mathbf{X}_i)_{i \in A}$ be the corresponding subvector. We define $L^2(p)$ as the Hilbert space of square-integrable measurable functions of $\mathbf{X}$, equipped with the standard inner product $\langle f, g \rangle \coloneqq \mathbb{E}[f(\mathbf{X})g(\mathbf{X})] = \int_{\mathcal{X}} f(\mathbf{x})g(\mathbf{x}) p(\mathbf{x}) d\nu(\mathbf{x})$.

Following the foundational work of \citet{hoeffding_class_1948} and \citet{stone1994use}, we focus on the Functional ANOVA decomposition, generalized later by \citet{hooker_2007} as a variational problem, properly defined below.

\paragraph{Generalized Functional ANOVA \citep{hooker_2007}.} \label{def:anova_hooker}
    Let $f \in L^2(p)$. We say that $f$ satisfies a generalized functional ANOVA decomposition if we can jointly define a collection of functions $\{f_A\}_{A \subseteq [d]}$ such that
    \begin{equation}\label{eq:def_anova}
        f(\mathbf{X}) = \sum_{A \subseteq [d]} f_A(\mathbf{X}_A),
    \end{equation}
    subject to the \emph{hierarchical orthogonality condition}
    \begin{equation}\label{eq:orthogonality}
        \forall B \subsetneq A, \: \forall g \in L^2_B , \quad \langle \: f_A(\mathbf{X}_A) \: , \: g(\mathbf{X}_B) \: \rangle = 0,
    \end{equation}
    where $L^2_B$ denotes the subspace of functions in $L^2(p)$ depending only on the variables indexed by $B$.

The hierarchical orthogonality condition \eqref{eq:orthogonality} is paramount: it ensures that any information added by a higher-order set $A$ is strictly new (orthogonal) to the information already contained in any proper subset $B \subsetneq A$, thereby leading to a decay of information per set cardinality.

Under suitable conditions for its existence, obtaining the decomposition \eqref{eq:def_anova} while satisfying the hierarchical orthogonality conditions \eqref{eq:orthogonality} is a challenging task. While no closed-form formula exists for the general case, the independent setting admits an explicit and unique solution. In this context, the decomposition components are mutually orthogonal and are retrieved via the Möbius transform \cite{rota1964foundations} of the conditional expectations
\begin{equation}\label{eq:anova_indep}
    f_A(\mathbf{X}_A) = \sum_{ B \subseteq A } (-1)^{ \vert A \vert - \vert B \vert } \mathbb{E}\left[ f(\mathbf{X}) \mid \mathbf{X}_B \right].
\end{equation}

\paragraph{Shapley values \citep{owen_shapley_2014, owen_shapley_2017}.} For interpretability tasks and alternatively to the standard formulation, Shapley values \citep{Shapley_1953} can be defined through the lens of \emph{dividends} \citep{harsanyi_1963}. Let $w$ be a set function such that the value of any coalition $S \subseteq [d]$ decomposes as $v(S) = \sum_{A \subseteq S} w(A)$.
The corresponding Shapley value $\mathrm{shap}_i$ for player $i \in [d]$ is then uniquely determined by distributing these dividends equally among the participants
\begin{equation} \label{eq:shapley_dividends}
    \mathrm{shap}_i \coloneqq \sum_{ A \subseteq [d] \: : \: A \ni i } \frac{ w(A) }{ \vert A \vert }.
\end{equation}
Leveraging this perspective, one can define Shapley values for \emph{local feature attribution}. For a specific query $\mathbf{x}$, the components $f_A(\mathbf{x}_A)$ from the Generalized Functional ANOVA act as the Harsanyi dividends of the model. The corresponding Shapley values $\mathrm{shap}_i(\mathbf{x})$ are given by
\begin{equation}\label{eq:anova_shap}
    \mathrm{shap}_i(\mathbf{x}) \coloneqq \sum_{ A \subseteq [d] \: : \: A \ni i } \frac{ f_A(\mathbf{x}_A) }{ \vert A \vert }.
\end{equation}
This ANOVA-based formulation then can be seen as a generalization of SHAP values \citep{lundberg2017unified} in the particular case where the components of $\mathbf{X}$ are mutually independent.

\section{Main Theoretical Contribution}

In this paper, we specialize the previous framework to the case where inputs $\mathbf{X}$ are \textbf{categorical}. In this setting, the measure $\nu$ is the counting measure. The support $\mathcal{X}$ is finite and is contained in the \emph{full hypergrid} $E$ defined as
\begin{equation}
    E \coloneqq \prod_{i=1}^d \{ 0, \dots, N_i - 1 \},
\end{equation}
where $N_i$ denotes the number of categories for variable $X_i$. Consequently, $p$ becomes a probability mass function (pmf). Note that in the general case (e.g., dependent variables), the support $\mathcal{X}$ may be a strict subset of the hypergrid ($\mathcal{X} \subsetneq E$).

For any subset $A \subseteq [d]$, we define the \emph{truncated grid} $E_{A^{-}} \coloneqq \prod_{i\in A} \{ 0, \dots, N_i - 2 \}$, corresponding to configurations where the last category is excluded. The total cardinality of the hypergrid admits the following decomposition based on the \emph{inclusion-exclusion principle} \citep{rota1964foundations}
\begin{equation} \label{eq_cardinality}
    |E| = \prod_{i=1}^d N_i = \sum_{A \subseteq [d]} \prod_{i \in A} (N_i - 1) = \sum_{A \subseteq [d]} |E_{A^{-}}|.
\end{equation}
This motivates the definition of the index space 
$$\mathcal{I} \coloneqq \{ (A, \mathbf{z}) \: : \: A \subseteq [d], \, \mathbf{z} \in E_{A^{-}} \},$$
which is in bijection with $E$. Note that $\mathbf{z}$ depends on the set $A$ since it is valued in $E_{A^-}$. Finally, since $E$ is finite, the corresponding Hilbert space denoted $L^2$ encompasses all finite real-valued functions defined on the support $\mathcal{X}$.

\subsection{Closed-Form Representation}

We introduce in next definition an extension of the Walsh--Hadamard basis, also known as the family of parity functions, that plays a key role in the Fourier analysis of Boolean functions \citep{o2014analysis}. This extension has the nice property of satisfying the condition \eqref{eq:orthogonality}.

\begin{definition}[Inverse likelihood mechanism]
     Let $\mathcal{F} \coloneqq \{ \phi_{A}^{(\mathbf{z})}, \: {(A,\mathbf{z})\in\mathcal{I}} \}$ be the collection of functions $\phi_A^{(\mathbf{z})}$ defined for any $\mathbf{x} \in \mathcal{X}$ by 
       \begin{tcolorbox}[blue_style]
\begin{equation*}
\phi_A^{(\mathbf{z})}(\mathbf{x}) \coloneqq \frac{ \prod\limits_{i \in A} \left( \mathbf{1}\left\{ \mathbf{x}_i = \mathbf{z}_i \right\} - \mathbf{1}\left\{ \mathbf{x}_i = N_i - 1 \right\} \right) }{ p_A(\mathbf{x}_A) }
 \end{equation*}
\end{tcolorbox}
\end{definition}

By convention, $E_{{\emptyset}^{-}}$ is a singleton and the only element $\mathbf{z}_{\emptyset} \in E_{{\emptyset}^{-}}$ satisfies $\phi_{\emptyset}^{(\mathbf{z_{\emptyset}})} = 1$.

Note that the denominator is strictly non-zero, provided that $\mathbf{x}$ lies within the support $\mathcal{X}$ of the random variable $\mathbf{X}$. Furthermore, this function can be interpreted as a \emph{signed inverse likelihood}. The numerator acts as a $\mathrm{sign}$ function, taking values in $\{-1, 0, 1\}$, while the denominator corresponds to the likelihood of $\mathbf{X}_A$.

\begin{tcolorbox}[blue_style]
\begin{theorem}
\label{thm:thm_main}
Any function $f \in L^2$ admits a Fourier expansion of the form
\begin{equation} \label{eq_representation}
    f(\mathbf{X}) = \sum_{ (A , \mathbf{z}) \in \mathcal{I} } c_A^{(\mathbf{z})}(f) \cdot \phi_A^{(\mathbf{z})}(\mathbf{X})
\end{equation}
where the set of coefficients $\{c_A^{(\mathbf z)}(f)\}$ solves the linear problem formulated in Section \ref{sec:linear}. 
Moreover, the collection of functions defined by
\begin{equation}
    f_A \coloneqq \sum_{\mathbf{z} \in E_{A^{-}}}
        c_A^{(\mathbf{z})}(f) \cdot \phi_A^{(\mathbf{z})},
\end{equation}
satisfies the functional ANOVA formulation \eqref{eq:def_anova} and the hierarchical orthogonality condition \eqref{eq:orthogonality}.
\end{theorem}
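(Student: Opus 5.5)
The plan is to reduce both claims to the following elementary characterization of hierarchical orthogonality on a finite grid. For fixed $A$, set $F \coloneqq h\cdot p_A$ for a function $h$ of $\mathbf{x}_A$ defined on the support of $\mathbf{X}_A$, and extend $F$ by zero to all of $E_A \coloneqq \prod_{i\in A}\{0,\dots,N_i-1\}$.

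\textbf{Step 1 (the key identity).} For any $B\subsetneq A$ and $g\in L^2_B$, summing out the coordinates outside $A$ gives
\begin{equation*}
\langle h(\mathbf{X}_A), g(\mathbf{X}_B)\rangle=\sum_{\mathbf{x}_A\in E_A} F(\mathbf{x}_A)\, g(\mathbf{x}_B).
\end{equation*}
The right-hand side is a plain counting-measure sum. Hence $h$ satisfies \eqref{eq:orthogonality} if and only if, for every $i\in A$ and every fixed value of $\mathbf{x}_{A\setminus\{i\}}$, the sum of $F$ over $\mathbf{x}_i\in\{0,\dots,N_i-1\}$ vanishes. This is exactly the ``top interaction'' subspace in the tensor (counting-measure) ANOVA on $E_A$.

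\textbf{Step 2 (a basis for that subspace).} On each factor $\{0,\dots,N_i-1\}$, use the basis $\{1\}\cup\{e_i^{(z)}\coloneqq \mathbf{1}\{x_i=z\}-\mathbf{1}\{x_i=N_i-1\}: z\le N_i-2\}$. Each $e_i^{(z)}$ has zero sum, and these functions span the zero-sum functions. Tensorizing, the functions $F$ with all one-dimensional marginal sums zero are exactly the span of $\prod_{i\in A} e_i^{(z_i)}$ for $\mathbf{z}\in E_{A^-}$. This step is also where the dimension count \eqref{eq_cardinality} enters.

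\textbf{Consequence for the basis functions.} Dividing by $p_A$, a function $h$ of $\mathbf{x}_A$ on the support is orthogonal to every $L^2_B$, $B\subsetneq A$, if and only if:
\begin{itemize}
\item $h=\sum_{\mathbf{z}} c^{(\mathbf{z})}\phi_A^{(\mathbf{z})}$, and
\item the numerator $\sum_{\mathbf{z}} c^{(\mathbf{z})}\prod_{i\in A} e_i^{(z_i)}$ vanishes on $E_A\setminus \mathrm{supp}(p_A)$.
\end{itemize}
In particular, each $f_A$ built from coefficients satisfying these support constraints satisfies \eqref{eq:orthogonality}.

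\textbf{Step 3 (existence of the expansion).} I would first try to show that the linear problem of Section~\ref{sec:linear} has a solution. It should consist of the representation equations $f(\mathbf{x})=\sum c_A^{(\mathbf{z})}\phi_A^{(\mathbf{z})}(\mathbf{x})$ for $\mathbf{x}\in\mathcal{X}$, together with the vanishing-numerator constraints off the marginal supports.

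For existence, I would invoke the existence of the generalized functional ANOVA decomposition on a finite support (Hooker; Chastaing et al.), which I take as an available prior result. Its components $f_A$ satisfy \eqref{eq:orthogonality}, so by Steps 1--2 each $f_A\, p_A$, extended by zero, lies in the span of $\prod_{i\in A} e_i^{(z_i)}$. Reading off its coordinates yields coefficients $c_A^{(\mathbf{z})}$ that solve the system. Summing over $A$ then gives \eqref{eq_representation} together with the decomposition \eqref{eq:def_anova}.

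If a self-contained argument is preferred, I would replace the cited existence result by a finite-dimensional argument. The idea is to construct the components by induction on $|A|$: project $f$ minus the lower-order components onto the orthogonal complement of $\sum_{B\subsetneq A}L^2_B$ inside $L^2_A$. The step I would have to check is that the resulting components still sum to $f$.

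\textbf{Main obstacle.} The delicate point is the non-rectangular support. The na\"ive claim that every $\phi_A^{(\mathbf{z})}$ is individually orthogonal to $L^2_B$ fails there. For example, with support $\{(0,0),(1,1)\}$ one gets $\phi_{\{1,2\}}^{(0,0)}=1/p$, which is not orthogonal to functions of $x_1$. Consequently the coefficients are not unique: the family has $|E|$ elements, while $L^2$ has dimension only $|\mathcal{X}|$.

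The orthogonality therefore has to come from the constraints built into the linear problem, not from the basis functions alone. This is why I route the proof through the characterization of Step~1 and would state explicitly that the coefficients in the theorem are those solving the constrained system.
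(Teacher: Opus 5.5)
\textbf{Comparison with the paper's proof.} Your route differs from the paper's, and on the point you flag as the main obstacle you are right and the paper is not. The paper proves two things. First, $\mathcal{F}$ spans $L^2$, using tensor products of the $\mathcal{G}_i$ and the automorphism $g\mapsto g/p_A$ of $L^2_A$. Second, its third lemma asserts that every $\phi_A^{(\mathbf u)}$ is orthogonal to every $\phi_B^{(\mathbf v)}$ with $B\subsetneq A$. With that lemma, \emph{any} expansion has hierarchically orthogonal components, and existence is nothing more than spanning.

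The lemma's proof, however, factorizes the sum over $\{\mathbf x_A:\mathbf x\in\mathcal S(\mathbf u,\mathbf v)\}$ into a product over $i\in A\setminus B$ of sums over $\{\mathbf u_i,N_i-1\}$. This tacitly requires all those fibres to lie in the support, which in effect means full support. Your diagonal example breaks it. So does the paper's own three-point example, where $\mathbb{E}[u_{12}]=1-1-1=-1\neq 0$.

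Your Steps 1--2 give the correct replacement. The space $H_A^0\coloneqq\{h\in L^2_A:\ h\perp L^2_B\ \forall B\subsetneq A\}$ is exactly the set of combinations $\sum_{\mathbf z}c^{(\mathbf z)}\phi_A^{(\mathbf z)}$ whose numerator vanishes on $E_A\setminus\mathrm{supp}(p_A)$. Under full support this constraint is void, so every $\phi_A^{(\mathbf z)}$ lies in $H_A^0$; you recover the paper's lemma, and spanning alone then gives existence. In general only constrained coefficient vectors work.

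Note that the paper's ``linear problem'' is the Gram system $\Gamma\bm{c}=\bm{\mu}$, not the constrained system you anticipated. Its solution set is exactly the set of exact representations, so your coefficients satisfy it automatically. It does not, however, enforce the constraints. On the diagonal with $p(0,0)=p(1,1)=\tfrac12$ one has $\phi_{\{1,2\}}^{(0,0)}\equiv 2$. Hence $f\equiv 1=\tfrac12\phi_{\{1,2\}}^{(0,0)}$ solves $\Gamma\bm c=\bm\mu$, yet $f_{\{1,2\}}=1$ is not centred. The theorem therefore holds only in the existential form you aim for.

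\textbf{The gap: existence.} The cost of your route is that existence no longer comes for free, and as written that step is not justified.
\begin{itemize}
\item The results you cite do not cover sparse supports. Chastaing et al.\ assume $p$ is bounded below by a positive multiple of $p_A\,p_{[d]\setminus A}$, which on a finite space fails precisely when the support is not a product.
\item Your bottom-up fallback also fails. Read as subtracting the $f_B$, $B\subsetneq A$, take $\mathbf X_1=\mathbf X_2$ and $f$ a non-constant function of $\mathbf X_1$. It yields $f_{\{1\}}=f_{\{2\}}=f-\mathbb E f$ and $f_{\{1,2\}}=0$ (since $H^0_{\{1,2\}}=\{0\}$), so the components sum to $2f-\mathbb E f$. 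Under dependence the final remainder need not lie in $H^0_{[d]}$, and projecting it discards part of $f$.
\end{itemize}

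The fix is top-down and short. In finite dimension, $\sum_{B\subsetneq A}L^2_B=\sum_{i\in A}L^2_{A\setminus\{i\}}$ is a subspace of $L^2_A$ whose orthogonal complement inside $L^2_A$ is exactly $H_A^0$. Hence $L^2_A=H_A^0\oplus\sum_{i\in A}L^2_{A\setminus\{i\}}$, and induction on $|A|$ (base case $H^0_\emptyset=L^2_\emptyset$) gives $L^2_A=\sum_{C\subseteq A}H_C^0$. Taking $A=[d]$ splits $f$ into components $f_C\in H_C^0$. Your Steps 1--2 then supply the coefficients, and the argument is complete for arbitrary supports.
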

\end{tcolorbox}
\begin{remark} Obviously, for all $A \neq \emptyset$, $f_{A}$ are centered and $ c_{\emptyset}^{(\mathbf{z}_{\emptyset})}(f) = \mathbb{E}\left[ f(\mathbf{X}) \right] $
\end{remark}

The preceding theorem establishes that the family $\mathcal{F}$ constitutes a \emph{spanning set} for the Generalized Functional ANOVA decomposition. While the proof leverages linear algebraic arguments to show that $\mathcal{F}$ generates the entire $L^2$ space, it does not imply linear independence of the basis elements. Consequently, without further assumptions on the support $\mathcal{X}$, the uniqueness of this expansion is not guaranteed in general. In practice, this is not a major issue as we prioritize the \emph{simplest possible decomposition}, favoring representations driven by low-order interactions. Empirically, the decomposition is sparse and \emph{main effects} and \emph{pair effects} are often sufficient to effectively reconstruct $f$.

This representation provides a closed-form of the Generalized Functional ANOVA proposed by \citet{hooker_2007}. A key advantage of this framework is its ability to extend the standard ANOVA to \emph{complicated settings}, such as functional dependencies among inputs and so in particular a non rectangular support. Furthermore, while it consistently recovers the standard ANOVA in regular regimes, its algebraic structure is particularly amenable to tractable computation, especially when applied to empirical measures (\textit{e.g.} on standard tabular datasets).

In high-dimensional regimes, our approach guarantees, at a minimum, to recover the unique additive decomposition 
\begin{equation} f = \underbrace{f_{\emptyset} + f_1 + \dots + f_{d}}_{f_{\text{explainable}}} + \underbrace{f_{\{1, \dots, d\}}}_{f_{\text{residual}}}, 
\end{equation} 
where the \emph{main effects} are computed with high efficiency while strictly satisfying hierarchical orthogonality conditions. This ensures that the information captured by these first-order components is maximized, thereby optimizing global interpretability.

\subsection{Fourier Representation to Functional ANOVA}

We now state the assumption under which the above representation is unique.

\begin{corollary}\label{coro:full_support}
    Consider the full-support setting, \textit{i.e.} where $\mathcal{X} = E$. In this case, the full collection $\mathcal{F}$ forms a basis of $L^2$ and the decomposition \eqref{eq_representation} becomes unique.
\end{corollary}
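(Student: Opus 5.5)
The plan is a dimension count combined with \cref{thm:thm_main}. When $\mathcal{X}=E$, the space $L^2$ is the space of all real functions on $E$. It therefore has dimension $|E|=\prod_{i=1}^d N_i$, because every point of $E$ has positive mass. The family $\mathcal{F}$ is indexed by $\mathcal{I}$, and by \eqref{eq_cardinality} the set $\mathcal{I}$ is in bijection with $E$. Hence $|\mathcal{F}|\le|\mathcal{I}|=|E|=\dim L^2$, where the inequality is strict only if two indices give the same function. \cref{thm:thm_main} tells us that $\mathcal{F}$ spans $L^2$. A spanning family with at most $\dim L^2$ members, indexed by $\mathcal{I}$, must consist of $|\mathcal{I}|$ distinct and linearly independent vectors. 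So it is a basis, and the coefficients $c_A^{(\mathbf{z})}(f)$ in \eqref{eq_representation} are unique.

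The one point to check carefully is that \cref{thm:thm_main} really gives spanning of all of $L^2$ in this setting. The functions $\phi_A^{(\mathbf{z})}$ are well defined on all of $E$ because $p_A>0$ everywhere, so there is no issue there. The theorem is stated for any $f\in L^2$, so spanning is immediate.

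I would also give a direct independence argument as a safeguard, in case one wants to avoid relying on the spanning proof. Since $p_A(\mathbf{x}_A)>0$, multiplying by it preserves linear independence within each fixed $A$. So it suffices to show that the numerators
\[
\psi_A^{(\mathbf{z})}(\mathbf{x})=\prod_{i\in A}\bigl(\mathbf 1\{\mathbf{x}_i=\mathbf{z}_i\}-\mathbf 1\{\mathbf{x}_i=N_i-1\}\bigr)
\]
are independent across distinct $A$ once they are rescaled. The rescaling by $1/p_A$ mixes sets, so instead I would use hierarchical orthogonality \eqref{eq:orthogonality}, which \cref{thm:thm_main} grants for the $f_A$. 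Suppose $\sum_{A} f_A=0$ with $f_A\in\mathrm{span}\{\phi_A^{(\mathbf{z})}\}$. Take an inclusion-maximal $A$ with $f_A\ne0$ and pair the relation with $f_A$. For $B\subsetneq A$, the term $\langle f_B,f_A\rangle$ vanishes by orthogonality. The terms with $B\not\subseteq A$ are the main obstacle, since orthogonality only controls nested pairs.

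To handle that obstacle, I would rely on the fact that $\mathcal F$ has exactly $|E|$ elements and spans $L^2$. This already forces independence, so the count argument settles the matter. Within a fixed $A$, independence of the $\psi_A^{(\mathbf{z})}$ follows from a tensor-product argument. In each coordinate, the vectors $e_{z}-e_{N_i-1}$ for $z=0,\dots,N_i-2$ are linearly independent in $\mathbb R^{N_i}$.
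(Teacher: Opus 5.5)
Your proposal is correct and is essentially the paper's own proof. $\mathcal{F}$ spans $L^2$ by Theorem~\ref{thm:thm_main}, and $|\mathcal{F}|\le|\mathcal{I}|=|E|=\dim L^2$ when $\mathcal{X}=E$, so a spanning family of that size is a basis and the coefficients in \eqref{eq_representation} are unique; the ``safeguard'' paragraph is not an independent route (it leaves the non-nested pairs $B\not\subseteq A$ unresolved and falls back on the same count), so you can drop it.
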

Since the family is generating and its cardinality matches the dimension of the space, then uniqueness of representation follows immediately. While \citet{chastaing_generalized_2012} and subsequently \citet{IlIdrissi2023} established the theoretical existence and uniqueness of the Functional ANOVA decomposition under these assumptions, our result drastically improves this work by providing an \textit{explicit construction} of the decomposition basis for categorical data.

\begin{corollary}\label{coro:indep}
    Assume that the input variables are mutually independent. Then, the decomposition \eqref{eq_representation} strictly recovers the formulation given in \eqref{eq:anova_indep}.
\end{corollary}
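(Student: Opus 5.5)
Under mutual independence, $p(\mathbf{x}) = \prod_i p_i(\mathbf{x}_i)$ has full support $\mathcal{X}=E$ (since each $p_i>0$ on $\{0,\dots,N_i-1\}$). Hence Corollary~\ref{coro:full_support} applies: $\mathcal{F}$ is a basis and the ANOVA decomposition of Theorem~\ref{thm:thm_main} is unique. The plan is therefore to show that the Möbius components \eqref{eq:anova_indep} also arise from an expansion of the form \eqref{eq_representation}. Concretely, each $g_A := \sum_{B\subseteq A}(-1)^{|A|-|B|}\mathbb{E}[f\mid \mathbf{X}_B]$ should lie in $V_A := \mathrm{span}\{\phi_A^{(\mathbf{z})} : \mathbf{z}\in E_{A^-}\}$. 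Since $f=\sum_A g_A$ (Möbius inversion), uniqueness of coefficients in the basis $\mathcal{F}$ then forces $f_A = g_A$ for every $A$.

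\textbf{Step 1: characterize $V_A$ under independence.} With $p_A(\mathbf{x}_A)=\prod_{i\in A}p_i(\mathbf{x}_i)$ the basis function factorizes:
\[
\phi_A^{(\mathbf{z})}(\mathbf{x}) = \prod_{i\in A} \psi_i^{(z_i)}(\mathbf{x}_i), \qquad \psi_i^{(z)}(x) := \frac{\mathbf{1}\{x=z\}-\mathbf{1}\{x=N_i-1\}}{p_i(x)}.
\]
Each $\psi_i^{(z)}$ is centered: $\mathbb{E}[\psi_i^{(z)}(X_i)] = 1-1 = 0$. The $N_i-1$ functions $\{\psi_i^{(z)}\}_{z\le N_i-2}$ are linearly independent, which one sees by evaluating at the points $x=z$. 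They therefore span the $(N_i-1)$-dimensional space $H_i^0$ of centered functions of $X_i$. Consequently $V_A = \bigotimes_{i\in A} H_i^0$, which is exactly the space of functions $h(\mathbf{x}_A)$ satisfying $\mathbb{E}[h\mid \mathbf{X}_{A\setminus\{i\}}]=0$ for all $i\in A$. Under the product measure, a function of $\mathbf{x}_A$ with this vanishing-partial-mean property decomposes in the product of one-dimensional bases (constant plus $H_i^0$) using only terms with all factors in $H_i^0$. This gives the characterization of $V_A$.

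\textbf{Step 2: verify that $g_A\in V_A$.} The component $g_A$ depends only on $\mathbf{x}_A$. For $i\in A$, independence gives $\mathbb{E}[\mathbb{E}[f\mid \mathbf{X}_B]\mid \mathbf{X}_{A\setminus\{i\}}] = \mathbb{E}[f\mid \mathbf{X}_{B\setminus\{i\}}]$. This commuting of conditional expectations is the key place where independence is used, and it is the step that needs care. Pairing each $B\ni i$ with $B\setminus\{i\}$, the two terms carry opposite signs and cancel, so $\mathbb{E}[g_A\mid \mathbf{X}_{A\setminus\{i\}}]=0$.

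\textbf{Step 3: conclude.} Expand each $g_A$ in $\{\phi_A^{(\mathbf{z})}\}$, which is possible by Step 1, and sum over $A$. This produces a representation of $f$ of the form \eqref{eq_representation}. By the uniqueness of Corollary~\ref{coro:full_support}, its coefficients coincide with $c_A^{(\mathbf{z})}(f)$, and hence $f_A=g_A$.

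The main obstacle is Step 1: identifying the span of the factorized inverse-likelihood functions with the tensor product of centered spaces, and matching this with the conditional-mean characterization. The dimension count $\prod_{i\in A}(N_i-1)$ on both sides settles it once inclusion is shown.
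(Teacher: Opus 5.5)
Your proof is correct, but it takes a different route from the paper's. The paper first invokes uniqueness under full support. Its actual computation then shows that, under independence, $\mathcal{F}$ is orthogonal \emph{across} blocks: $\langle \phi_A^{(\mathbf{z})}, \phi_B^{(\mathbf{z}')}\rangle = 0$ for $A \neq B$. It gets this by factorizing $p_A$, splitting $A\cup B$ into $A\cap B$ and the symmetric difference, and using that each one-dimensional factor is centered. The identification with \eqref{eq:anova_indep} is then left implicit. It can be completed by noting that block orthogonality gives $\mathbb{E}[f\mid \mathbf{X}_B]=\sum_{A\subseteq B} f_A$, and then applying M\"obius inversion.

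You never need cross-block orthogonality. Instead you identify $\mathrm{span}\{\phi_A^{(\mathbf{z})}\}_{\mathbf{z}}$ with the classical Hoeffding space of functions of $\mathbf{x}_A$ with vanishing partial means. You then check directly that the M\"obius components lie in that space, where commuting the conditional expectations is the only use of independence beyond factorization. Finally you conclude by uniqueness of coordinates in the basis $\mathcal{F}$.

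What your route buys is that it makes explicit the link the paper's ``simple argument'' passes over. Uniqueness of coefficients in $\mathcal{F}$ does not by itself say that the classical components are of the form \eqref{eq_representation}. Your route also gives a clean structural description of each block span.

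What the paper's route buys is that $\Gamma$ is block diagonal and the $f_A$ are mutually orthogonal, which yields the variance decomposition. In your approach this is only implicit, through the mutual orthogonality of the Hoeffding spaces under independence.

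Both arguments rely on the same implicit assumption that every category has positive marginal probability, so that $\mathcal{X}=E$ and the one-dimensional factors are centered.
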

\begin{remark}\label{remark:boolean}
    In the specific case of $\{0,1\}^d$ endowed with the uniform measure, our proposed basis collapses to the standard parity functions basis $\{\chi_A\}_{A \subseteq [d]}$, explicitly given by
\begin{equation}
    \forall \mathbf{x} \in \{0,1\}^d, \quad \chi_A(\mathbf{x}) \coloneqq (-1)^{ \sum_{i \in A} \mathbf{x}_i }.
\end{equation}
Consequently, our framework strictly recovers the corresponding Fourier decomposition \citep{o2014analysis}.
\end{remark}

\subsection{Linear Formulation}\label{sec:linear}

In this subsection, we introduce the algebraic objects and matrix notations required to formulate the underlying linear system. Specifically, we establish that the coefficients $c_A^{(\mathbf{z})}(f)$ arise as the solution to this linear system and can be interpreted as generalized discrete Fourier coefficients.

\begin{definition}
Consider two subsets $A, B \subseteq [d]$ and vectors $\mathbf{z}^{(A)} \in E_{A^{-}}$ and $\mathbf{z}^{(B)} \in E_{B^{-}}$. We define the entry $\gamma_{A,B}\left( \mathbf{z}^{(A)} , \mathbf{z}^{(B)} \right)$ as the following $L^2$ inner product
\begin{equation}
    \gamma_{A,B}\left( \mathbf{z}^{(A)} , \mathbf{z}^{(B)} \right) \coloneqq \left\langle \: \phi_A^{(\mathbf{z}^{(A)})} \: , \: \phi_B^{(\mathbf{z}^{(B)})} \: \right\rangle.
\end{equation}
This induces a Gram matrix $\Gamma$ structured into blocks $\Gamma_{A,B}$, formally defined as
\begin{equation}
    \Gamma_{A,B} \coloneqq \left( \gamma_{A,B}\left( \mathbf{z}^{(A)} , \mathbf{z}^{(B)} \right) \right)_{ (\mathbf{z}^{(A)} , \mathbf{z}^{(B)}) \in E_{A^{-}} \times E_{B^{-}} }.
\end{equation}
Each block $\Gamma_{A,B}$ has dimensions $\lvert E_{A^{-}} \rvert \times \lvert E_{B^{-}} \rvert$. Furthermore, the full matrix $\Gamma$ is square, symmetric, and positive semi-definite, with total dimension $\left\vert E \right\vert \times \left\vert E \right\vert$.
\end{definition}
\begin{remark}
    In the independent setting, this matrix is block diagonal, 
    \textit{i.e.} $ \Gamma = \mathrm{diag}\left( \Gamma_{\emptyset,\emptyset}, \dots, \Gamma_{[d],[d]} \right)$.
\end{remark}

\begin{definition}
    Consider a subset $A \subseteq [d]$ and a configuration vector $\mathbf{z} \in E_{A^{-}}$. For any function $f \in L^2$, we define the coefficient $\mu_A^{(\mathbf{z})}(f)$ as the following inner product
    \begin{equation}
        \mu_A^{(\mathbf{z})}(f) \coloneqq \left\langle \: f \: , \: \phi_A^{(\mathbf{z})} \: \right\rangle.
    \end{equation}
    Stacking these terms according to the previously defined block-wise structure as follows
    \begin{equation}
        \bm \mu_A (f) \coloneqq \left( \mu_A^{(\mathbf{z})}(f) \right)_{ \mathbf{z} \in E_{A^-} }^{\top},
    \end{equation}
    yields the global mean vector $\bm{\mu}(f)$ of dimension $\left\vert E \right\vert$.
\end{definition}

\begin{proposition}\label{prop:coefficients}
    The family of real coefficients $c_A^{(\mathbf{z})}(f)$ (see \eqref{eq_representation}) can be  represented as a flattened vector denoted ${\bm{c}}(f)$ and the corresponding vector is a solution to the linear system
    \begin{equation}\label{eq:linear_system}
        \Gamma {\bm{c}}(f) = \bm{\mu}(f),
    \end{equation}
\end{proposition}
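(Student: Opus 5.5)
The plan is to obtain the system \eqref{eq:linear_system} as the normal equations of the expansion \eqref{eq_representation}, and to show it is consistent by a range argument on the Gram matrix. We take as given from Theorem~\ref{thm:thm_main} that $\mathcal{F}$ spans $L^2$. Hence for every $f \in L^2$ there is at least one vector $\bm{c}(f) \in \mathbb{R}^{|E|}$, indexed by $\mathcal{I}$ in the block order used for $\Gamma$, such that $f = \sum_{(B,\mathbf{z}')\in\mathcal{I}} c_B^{(\mathbf{z}')}(f)\,\phi_B^{(\mathbf{z}')}$ holds $p$-almost surely, i.e.\ pointwise on $\mathcal{X}$.

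\textbf{Step 1: necessity.} Fix $(A,\mathbf{z}) \in \mathcal{I}$ and take the $L^2$ inner product of both sides of the expansion with $\phi_A^{(\mathbf{z})}$. The sum is finite, so bilinearity gives
\begin{equation*}
\mu_A^{(\mathbf{z})}(f) = \langle f, \phi_A^{(\mathbf{z})}\rangle = \sum_{(B,\mathbf{z}')\in\mathcal{I}} c_B^{(\mathbf{z}')}(f)\, \gamma_{A,B}(\mathbf{z},\mathbf{z}').
\end{equation*}
This is exactly the $(A,\mathbf{z})$ row of $\Gamma \bm{c}(f) = \bm{\mu}(f)$. Stacking the rows block by block over $A$ and over $\mathbf{z} \in E_{A^-}$ yields \eqref{eq:linear_system}. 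Every quantity here is well defined, because $\phi_A^{(\mathbf{z})}$ is finite on $\mathcal{X}$: its denominator $p_A(\mathbf{x}_A) > 0$ there.

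\textbf{Step 2: converse.} We also want any solution of the system to produce a valid expansion, so that "solves the linear problem" in Theorem~\ref{thm:thm_main} fully characterizes the coefficients. Define the synthesis map $\Phi:\mathbb{R}^{|E|}\to L^2$ by $\bm{c}\mapsto \sum c_B^{(\mathbf{z}')}\phi_B^{(\mathbf{z}')}$. Its adjoint is $\Phi^*: f \mapsto \bm{\mu}(f)$, and $\Gamma = \Phi^*\Phi$.

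Suppose $\Gamma\bm{c} = \bm{\mu}(f)$, and set $g = \Phi\bm{c} - f$. Then $\Phi^* g = 0$, i.e.\ $g$ is orthogonal to every element of $\mathcal{F}$. Since $\mathcal{F}$ spans $L^2$, this forces $g = 0$, so $f = \Phi\bm{c}$. For the same reason $\ker \Gamma = \ker \Phi$, and the solution set is an affine space $\bm{c}_0 + \ker\Phi$. Every element of it represents $f$. This matches the non-uniqueness remark, and the solution set is a singleton in the setting of Corollary~\ref{coro:full_support}.

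\textbf{Step 3: consistency and choice.} The system is solvable because $\bm{\mu}(f) = \Phi^* f$ lies in $\mathrm{range}(\Phi^*) = \mathrm{range}(\Phi^*\Phi) = \mathrm{range}(\Gamma)$. A canonical choice is $\bm{c}(f) = \Gamma^{+}\bm{\mu}(f)$, the minimum-norm solution.

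\textbf{Main obstacle.} The only delicate point is the dependence on Theorem~\ref{thm:thm_main}'s spanning property. The identities themselves are routine finite-dimensional linear algebra. The work lies in setting up the indexing bijection between $\mathcal{I}$ and the rows and columns of $\Gamma$ carefully, and in treating everything as functions on $\mathcal{X}$ rather than on $E$. The equality $\mathrm{range}(\Phi^*\Phi)=\mathrm{range}(\Phi^*)$ holds because $\ker(\Phi^*\Phi)=\ker\Phi$: if $\Phi^*\Phi\bm{c} = 0$ then $\|\Phi\bm{c}\|^2 = 0$.
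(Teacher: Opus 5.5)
Your Step 1 is the paper's proof: it takes the expansion from Theorem~\ref{thm:thm_main}, pairs it with each $\phi_A^{(\mathbf{z})}$, reads off the $(A,\mathbf{z})$ row of $\Gamma\bm{c}(f)=\bm{\mu}(f)$, and stacks the rows block by block. Your Steps 2 and 3, which use $\Gamma=\Phi^*\Phi$ and $\ker\Gamma=\ker\Phi$, are correct additions that the paper only asserts after the proposition: they show the system is always consistent, that every solution (for instance $\Gamma^{+}\bm{\mu}(f)$) gives a valid expansion of $f$, and that the solution is unique exactly when $\mathcal{F}$ is linearly independent.
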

This linear system (symmetric and positive semi-definite) admits at least one solution in general and this solution is unique if and only if $\Gamma$ is invertible, \textit{i.e.}, a full support setting.

\section{Computing the Decomposition}

Let us first assume access to the exact joint probability distribution with full support over the hypergrid. In regimes of moderate cardinality (typically $\left\vert E \right\vert \lesssim 10^4$), the linear system \eqref{eq:linear_system} can be solved using standard computational resources to achieve exact recovery of all the interaction terms. Beyond this threshold, the curse of dimensionality renders the exhaustive computation of the full basis intractable.

More realistically when $d$ increases, let us assume that we only dispose of sparse tabular observations of input configurations and numerical outputs. This can generally occur when a large proportion of input combinations are structurally impossible. This also can occur because the finite nature of a sample means that configurations with a low probability often remain unobserved. In both cases,  the mass is concentrated on a subset $\mathcal{X}$ of $E$ whose cardinality is significantly smaller than the full grid $E$, implying an effective dimension lower than the ambient one.

\paragraph{Scalable decomposition under sparsity.}
Let us assume that the input $X$ takes exactly $r$ distinct values, implying $\left\vert \mathcal{X} \right\vert = r$. In this setting, the dimension of the function space is obviously reduced to $\dim L^2 = r$.
Recall that, by Theorem~\ref{thm:thm_main}, the exhaustive collection $\mathcal{F}$ spans the entire space $L^2$.
Consequently, fundamental linear algebra principles guarantee the existence of at least one subfamily of exactly $r$ vectors, extracted from this overcomplete collection, that forms a valid basis. Furthermore, relative to this specific selection of basis vectors, the resulting decomposition is unique. We formalize this result in the next theorem.
\begin{theorem}
Let $r = \left\vert \mathcal{X} \right\vert$ denote the cardinality of the effective support, there exists at least one subset $\mathcal{S}_r$ of $\mathcal{I}$ satisfying $\left\vert \mathcal{S}_r \right\vert = r$ such that the corresponding collection
\begin{equation}
    \mathcal{B}_{\mathcal{S}_r} \coloneqq \left\{ \phi_A^{(\mathbf{z})}, \: {(A, \mathbf{z}) \in \mathcal{S}_r} \right\}
\end{equation}
forms a basis of the space $L^2$.
\end{theorem}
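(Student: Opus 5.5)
The plan is to reduce the statement to the elementary fact that any finite spanning set of a finite-dimensional vector space contains a basis. First, I will identify $L^2$ concretely. Since $\nu$ is the counting measure and $p(\mathbf{x})>0$ exactly on $\mathcal{X}$, two functions agree in $L^2$ if and only if they coincide on $\mathcal{X}$. Hence $L^2$ is isomorphic to $\mathbb{R}^{\mathcal{X}}$, with the indicators $\mathbf{1}\{\mathbf{X}=\mathbf{x}\}$, $\mathbf{x}\in\mathcal{X}$, as a basis, so $\dim L^2 = r$. Every $\phi_A^{(\mathbf{z})}$ is well defined on $\mathcal{X}$, because $p_A(\mathbf{x}_A)\ge p(\mathbf{x})>0$ there, so $\mathcal{F}\subset L^2$.

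Next, I will invoke Theorem~\ref{thm:thm_main}: every $f\in L^2$ can be written as a finite linear combination of elements of $\mathcal{F}$. Therefore $\operatorname{span}\mathcal{F}=L^2$. The collection $\mathcal{F}$ is indexed by $\mathcal{I}$, which is finite with $|\mathcal{I}|=|E|\ge r$.

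Then I will extract the basis greedily. Enumerate $\mathcal{I}$ in any order, for instance by increasing $|A|$, which matches the paper's preference for low-order terms. Starting from $\mathcal{S}=\emptyset$, add $(A,\mathbf{z})$ to $\mathcal{S}$ whenever $\phi_A^{(\mathbf{z})}\notin\operatorname{span}\{\phi_B^{(\mathbf{w})}:(B,\mathbf{w})\in\mathcal{S}\}$. By construction the selected family stays linearly independent. At the end, every element of $\mathcal{F}$ lies in the span of the selected family, so that span contains $\operatorname{span}\mathcal{F}=L^2$. The selected family is thus a basis, and its cardinality must equal $\dim L^2=r$. This gives $\mathcal{S}_r$. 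One subtlety needs a separate check: distinct indices in $\mathcal{S}_r$ could a priori give equal functions on $\mathcal{X}$. The greedy rule never selects a function already in the span, so no duplicate is ever chosen. The chosen indices are therefore distinct and $|\mathcal{S}_r|=r$.

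Equivalently, in matrix terms, let $\Phi\in\mathbb{R}^{r\times|E|}$ have entries $\phi_A^{(\mathbf{z})}(\mathbf{x})$. By Theorem~\ref{thm:thm_main} it has rank $r$, so it has $r$ linearly independent columns, and these give $\mathcal{S}_r$. The Gram matrix of this subfamily is then the corresponding principal $r\times r$ submatrix of $\Gamma$, which is invertible. That observation is what yields uniqueness of the restricted linear system. The only genuinely substantive ingredient is the spanning property, which Theorem~\ref{thm:thm_main} supplies, so I expect no real obstacle. The step needing the most care is the identification of $L^2$ with functions on $\mathcal{X}$, which makes the dimension exactly $r$ rather than $|E|$.
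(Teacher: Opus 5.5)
Your proposal is correct and follows essentially the same route as the paper: identify $L^2$ with $\mathbb{R}^{\mathcal{X}}$ so that $\dim L^2 = r$, take the spanning property of $\mathcal{F}$ from Theorem~\ref{thm:thm_main}, and extract a basis of size $r$ from this finite spanning family. Your greedy extraction is the same mechanism as Algorithm~\ref{algo:greedy_rank}, and your handling of possible duplicate functions and of the invertibility of the restricted Gram submatrix adds care that the paper leaves implicit.
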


\begin{corollary}
As a direct consequence of the previous theorem, the function $f$ admits the following decomposition
\begin{equation}
    f(\mathbf{X}) \;=\; \sum_{\left( A , \mathbf{z} \right) \in \mathcal{S}_r}
    c_A^{(\mathbf{z})}(f) \cdot \phi_A^{(\mathbf{z})}(\mathbf{X}).
\end{equation}
Moreover, letting $\bm{c}_r(f)$ denote the vector obtained by flattening the coefficients $\{ c_A^{(\mathbf{z})}(f) \}_{(A,\mathbf{z})\in\mathcal{S}_r}$, we have that $\bm{c}_r(f)$ is given by solving the restriction of the linear system~\eqref{eq:linear_system} to the index set $\mathcal{S}_r$.
\end{corollary}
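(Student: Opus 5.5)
The statement has two parts. The first is that $f$ can be expanded on $\mathcal{B}_{\mathcal{S}_r}$; this is just the definition of a basis, since $\mathcal{B}_{\mathcal{S}_r}$ is a basis of $L^2$ by the previous theorem. The second is that the coefficients solve the restricted Gram system. I will work entirely in the $r$-dimensional space $L^2$ of real functions on $\mathcal{X}$, with the inner product $\langle f,g\rangle=\sum_{\mathbf{x}\in\mathcal{X}} f(\mathbf{x})g(\mathbf{x})p(\mathbf{x})$. This is a genuine inner product because $p>0$ on $\mathcal{X}$.

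\textbf{Step 1 (existence and uniqueness).} By the previous theorem, $\{\phi_A^{(\mathbf{z})}\}_{(A,\mathbf{z})\in\mathcal{S}_r}$ is a basis of $L^2$. Since $f\in L^2$, there exist unique scalars $c_A^{(\mathbf{z})}(f)$, indexed by $\mathcal{S}_r$, such that $f=\sum_{(A,\mathbf{z})\in\mathcal{S}_r} c_A^{(\mathbf{z})}(f)\,\phi_A^{(\mathbf{z})}$ as functions on $\mathcal{X}$. This is the claimed identity, evaluated at $\mathbf{X}$ almost surely.

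\textbf{Step 2 (normal equations).} Fix $(B,\mathbf{w})\in\mathcal{S}_r$ and take the inner product of the expansion with $\phi_B^{(\mathbf{w})}$. By bilinearity,
\begin{equation*}
\mu_B^{(\mathbf{w})}(f)=\sum_{(A,\mathbf{z})\in\mathcal{S}_r} \gamma_{B,A}\big(\mathbf{w},\mathbf{z}\big)\, c_A^{(\mathbf{z})}(f).
\end{equation*}
Let $\Gamma_r$ be the principal submatrix of $\Gamma$ with rows and columns indexed by $\mathcal{S}_r$, and let $\bm\mu_r(f)$ be the corresponding subvector of $\bm\mu(f)$. Stacking the equations above over $\mathcal{S}_r$ gives exactly $\Gamma_r\bm c_r(f)=\bm\mu_r(f)$, which is the restriction of \eqref{eq:linear_system} to $\mathcal{S}_r$.

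\textbf{Step 3 (the system determines the coefficients).} It remains to show that $\bm c_r(f)$ is \emph{the} solution of this system, not just one solution. $\Gamma_r$ is the Gram matrix of a basis with respect to a genuine inner product, so it is symmetric positive definite. Indeed, $\mathbf{v}^\top\Gamma_r\mathbf{v}=\|\sum_{(A,\mathbf{z})\in\mathcal{S}_r} v_{A,\mathbf{z}}\,\phi_A^{(\mathbf{z})}\|^2$, which vanishes only if $\mathbf{v}=0$ by linear independence. Hence $\Gamma_r$ is invertible and $\bm c_r(f)=\Gamma_r^{-1}\bm\mu_r(f)$.

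The only delicate point is this invertibility argument. One has to make sure the norm is taken on $\mathcal{X}$, where $p>0$, rather than on $E$. Otherwise the quadratic form would only be semi-definite, as it is for the full matrix $\Gamma$.
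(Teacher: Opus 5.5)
Your proof is correct and follows the paper's route. The expansion is immediate from the basis property of $\mathcal{B}_{\mathcal{S}_r}$, and the restricted system is obtained by pairing the expansion with each $\phi_B^{(\mathbf{w})}$, $(B,\mathbf{w})\in\mathcal{S}_r$, and stacking, exactly as in the proof of Proposition~\ref{prop:coefficients}. Your Step 3 shows the restricted Gram matrix is positive definite, since it is the Gram matrix of a basis under a genuine inner product on functions on $\mathcal{X}$. That usefully makes explicit the uniqueness of $\bm c_r(f)$, which the paper only asserts in passing.
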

\begin{remark}\label{rem:feature_selection}
    A direct consequence of this theorem is that, for a fixed representation basis, the decomposition of $f$ is unique. Furthermore, standard linear algebra arguments ensure that if $f(\mathbf{X})$ is independent of a subvector $\mathbf{X}_{S}$, then all coefficients $c_A^{(\mathbf{z})}(f)$ for which $A \cap S \neq \emptyset$ equal 0.
\end{remark}

\paragraph{Identifiability and feature correlation.}
It is important to see that the basis support $\mathcal{S}_r$ is not unique. The universe of interactions $\mathcal{I}$ constitutes an \textit{overcomplete dictionary}; while the decomposition coefficients are unique \textit{conditioned} on a fixed support $\mathcal{S}_r$. The selection of the support itself is an ill-posed problem with a huge number of solutions. This is the direct consequence of strong feature correlations within the data. When $r \ll \left\vert E \right\vert$, the probability mass is concentrated on specific patterns, leaving the vast majority of the hypergrid empty. Mathematically, this implies that for any subset $B$ not included in the selected $\mathcal{S}_r$, any function of $X_B$ collapses into a deterministic linear combination of the basis elements. Consequently, these omitted terms possess no independent degrees of freedom: on the restricted support, they are fully determined by the selected basis configuration and thus do not represent \emph{genuine} interactions.

\paragraph{Example.}
Consider a first toy example with $d=2$ perfectly correlated Bernoulli variables ($\mathbf{X}_1 = \mathbf{X}_2$ a.s.), where the support is restricted to the diagonal $\mathcal{X} = \{(0,0), (1,1)\}$. Here, the dimension collapses to $r=2$. Selecting a basis becomes an arbitrary \textit{attribution choice}: modeling the signal purely via $X_1$ or purely via $X_2$ yields identical predictions on $\mathcal{X}$, despite offering distinct structural interpretations.

Now consider the modified example where the parameters of the Bernoulli distribution are denoted $q_1$ and $q_2$. We suppose also that the support satisfies $\mathcal{X} = \{(0,0), (0,1), (1,0)\}$. Here, the effective sample size is $r=3$, whereas the full combinatorial space size remains $\left\vert E \right\vert = 4$.
Let us denote the outputs of $\phi_A^{(\mathbf{z})}( \mathbf{X} )$ as column vectors $ u_{\emptyset} , u_{1} , u_{2} , u_{12} \in \mathbb{R}^{3}$, representing their evaluation on the ordered dataset.
\begin{equation}
    \underbrace{ \begin{pmatrix} 1 \\ 1 \\ 1 \end{pmatrix} }_{ u_{\emptyset} } , \quad
    \underbrace{ \begin{pmatrix} \frac{1}{1 - q_1} \\ \frac{1}{1 - q_1} \\  \frac{-1}{q_1} \end{pmatrix} }_{ u_{1} } , \quad
    \underbrace{ \begin{pmatrix} \frac{1}{1 - q_2} \\ \frac{-1}{q_2} \\ \frac{1}{1 - q_2} \end{pmatrix} }_{ u_{2} } , \quad
    \underbrace{ \begin{pmatrix} \frac{1}{1 - q_1 - q_2} \\ \frac{-1}{q_2} \\  \frac{-1}{q_1} \end{pmatrix} }_{ u_{12} }
\end{equation}
In this setting, the triplet $\left( u_{\emptyset}, u_1, u_2 \right)$ constitutes a natural basis for $\mathbb{R}^3$. We identify this as the \textit{canonical} decomposition—in the sense of simplicity—as it prioritizes main effects and excludes the interaction term, already recovered by the main effects.

\paragraph{Rank-based construction.}
Assume that the joint distribution of $\mathbf{X}$, denoted by $\mathcal{D}$ is given. This implies the knowledge of its support $\mathcal{X}$, its cardinality $r$, and the probability mass associated with each of the $r$ realizations constituting the support. To compute the decomposition basis, we introduce the following constructive rank-based procedure. In practice, however, the true distribution $\mathcal{D}$ is typically unknown. We instead rely on a dataset, represented by a data matrix $\mathbb{X} \in \mathbb{R}^{n \times d}$ containing $n$ realizations of $\mathbf X$. Consequently, we rely on the empirical distribution $\widehat{\mathcal{D}}_n$, which induces an empirical support and a corresponding empirical probability mass function. Although the empirical distribution may imperfectly approximate the underlying measure, it is worth noting that the model to be explained was never exposed to the true distribution $\mathcal{D}$. Fundamentally, we are explaining a model trained on the empirical distribution $\widehat{\mathcal{D}}_n$.

\begin{algorithm}[ht]
   \caption{Greedy Approach}
   \label{algo:greedy_rank}
\begin{algorithmic}[1]
   \REQUIRE Distribution $\mathcal{D}$
   \ENSURE Functional basis $\mathcal{B}$ of size $r$
   
   \STATE Fix an order in $\mathcal{I}$, denoted by $\{ I_1 , \dots , I_{| E |} \}$
   \STATE \textit{\# Consider canonical set order $\emptyset, \{1\} , \{2\} , \dots$}

   \STATE Initialize $\ell \leftarrow 1$ and set $( A , \mathbf{z} ) \leftarrow (I_\ell[0] , I_\ell[1])$
   \STATE $\mathcal{B} \leftarrow \{ \phi_A^{(\mathbf{z})} \}$
   
   \WHILE{$\mathrm{rank}(\mathcal{B}) < r$ \AND $\ell < |E|$}
       \STATE $\ell \leftarrow \ell + 1$
       \STATE $( A , \mathbf{z} ) \leftarrow (I_\ell[0] , I_\ell[1])$
       \STATE Let $v_{try} \leftarrow \phi_A^{(\mathbf{z})}$
       \STATE $\mathcal{B}_{try} \leftarrow \mathcal{B} \cup \{v_{try}\}$
       
       \IF{$\mathrm{rank}(\mathcal{B}_{try}) > \mathrm{rank}(\mathcal{B})$}
           \STATE $\mathcal{B} \leftarrow \mathcal{B}_{try}$
       \ENDIF
   \ENDWHILE
   
   \STATE \textbf{return} $\mathcal{B}$
\end{algorithmic}
\end{algorithm}

\paragraph{Low-rank approximation.}
While Algorithm~\ref{algo:greedy_rank} theoretically proceeds until the full rank $r$ is reached, the dimensionality of the support often renders the exact decomposition computationally prohibitive. In practice, we operate under a budget constraint $r_{\text{low}} < r$. The procedure terminates once the basis cardinality reaches this threshold. Empirically, standard metrics such as $R^2$, MSE, and Relative MSE maintain good to high performance levels despite this truncation. This reduction effectively navigates the accuracy-interpretability trade-off: it distills potentially complex models operating in dense spaces into concise, parsimonious explanations by sacrificing little reconstruction fidelity.

\paragraph{Computational complexity.}
The worst-case complexity of Algorithm~\ref{algo:greedy_rank} is $O(\vert E \vert \ast r^2)$: the procedure iterates over up to $\vert E \vert$
candidate basis functions and performs, at each step, a Gram--Schmidt-type
rank check whose cost scales as $O(r^2)$ in the current basis size. In practice, two complementary mechanisms make this complexity manageable in
high-dimensional settings. First, the low-rank approximation introduced above replaces the full target rank $r$ by a budget $r_{\text{low}} \ll r$,
reducing the effective cost to $O(\vert E \vert \ast r_{\text{low}}^2)$.
Second, by enumerating candidates in the \emph{canonical} order, Algorithm~\ref{algo:greedy_rank} exhausts low-order interactions before considering higher-order ones; this aligns with the sparsity-of-effects principle, under which the dominant contributions to $f$ typically arise from low-order components, so that the most informative basis elements are selected early.

\section{Experiments}

\paragraph{Analytical case.}
We design a synthetic experiment with $d=5$ variables to validate our framework. We impose the functional constraints $\mathbf{X}_3 = \mathbf{X}_2$ (perfect dependency) and $\mathbf{X}_5 = 1$ almost surely (constant). For simplicity, we suppose that $\mathbf{X}_1, \mathbf{X}_2, \mathbf{X}_4$ are \textit{i.i.d.} and follow a uniform distribution over $\{0,1,2\}$. The target function to be explained is defined as
\begin{equation}
    f(\mathbf{X}) \coloneqq \mathrm{sign}\left( \mathbf{X}_1 - \mathbf{X}_2 + 0.5 \cdot \mathbf{X}_3 \right).
    \label{eq:target_function}
\end{equation}
Theoretically, the functional space $L^2$ is strictly spanned by the free variables $(\mathbf{X}_1, \mathbf{X}_2, \mathbf{X}_4)$. Furthermore, due to the redundancy $\mathbf{X}_3=\mathbf{X}_2$, the target model reduces to a function of $(\mathbf{X}_1, \mathbf{X}_2)$ only. 

We evaluate our framework on the support $\mathcal{X}$. As expected, the elements $\phi_A^{(\mathbf{z})}$ selected to construct $L^2$ do not contain $\mathbf{X}_3$ and $\mathbf{X}_5$. Moreover, Table~\ref{tab:energy_decomposition} confirms the irrelevance of variable $\mathbf{X}_4$.

\begin{table}[h]
\centering
\caption{Norms of ANOVA Decomposition elements $f_A$.}
\label{tab:energy_decomposition}
\begin{tabular}{llc}
\toprule
\textbf{Order} & \textbf{Subset} $A$ & \textbf{Norm} $\| f_A \|_2^2$ \\
\midrule
Intercept & $\emptyset$ & $0.111$ \\
\midrule
Main Effects & $\{1\}$ & $0.518$ \\
             & $\{2\}$ & $0.074$ \\
             & $\{4\}$ & $0.000$ \\
\midrule
Interactions & $\{1, 2\}$ & $0.074$ \\
             & $\{1, 4\}$ & $0.000$ \\
             & $\{2, 4\}$ & $0.000$ \\
             & $\{1, 2, 4\}$ & $0.000$ \\
\bottomrule
\end{tabular}
\end{table}

\paragraph{Comparison with KernelSHAP in independent setting.}
We use the \textsc{Car Evaluation} and \textsc{Nursery} datasets (details in Table~\ref{tab:datasets_1}). These datasets are characterized by a uniform distribution over the full hypergrid $E$, which theoretically implies that the features are independent. Under this assumption, the SHAP values estimated by methods such as KernelSHAP~\citep{lundberg2017unified} must coincide with the analytical Shapley values~\eqref{eq:anova_shap} derived from functional ANOVA~\eqref{eq:anova_indep}. We empirically verify this equivalence and this experiment acts as a sanity check.

For the predictive tasks, we trained a Random Forest classifier on the \textsc{Car Evaluation} dataset and a MLP on the \textsc{Nursery} dataset. We compare the exact Shapley values computed via our framework against the approximations produced by KernelSHAP (configured with $200$ background samples) and we compute the Integrated Squared Error ($ISE$) between these two indicators over the entire dataset across all classes. For the $i^{\text{th}}$ output class and the $j^{\text{th}}$ feature, the $ISE$ matrix is defined as follows
\begin{equation}
    ISE(i,j) \coloneqq \mathbb{E}\left[ \left( \mathrm{KS}_j^{(i)}( \mathbf X ) - \mathrm{AS}_j^{(i)}( \mathbf X ) \right)^2 \right],
\end{equation}
where $\mathrm{KS}_j^{(i)}$ (resp $\mathrm{AS}_j^{(i)}$ ) denotes the $j^{\text{th}}$ \underline{K}ernel\underline{S}HAP value (resp \underline{A}NOVA-based \underline{S}hapley value) for the $i^{\text{th}}$ output class. Computing the full decomposition and Shapley values with our closed-form formula takes 0.5 s on \textsc{Car Evaluation} and 54 s on \textsc{Nursery}.

\begin{table}[t]
\centering
\caption{Uniform categorical datasets.}
\label{tab:datasets_1}
\resizebox{\columnwidth}{!}{
\begin{sc}
\begin{tabular}{lccc}
\toprule
Dataset & $d$ & $\{N_1,\dots,N_d\}$ & $\vert E \vert$ \\
\midrule
Car Evaluation & 6 & $\{ 4,4,4,3,3,3 \}$ & $1\,728$ \\
Nursery & 8 & $\{3,5,4,4,3,2,3,3\}$ & $12\,960$ \\
\bottomrule
\end{tabular}
\end{sc}}
\end{table}
Tables~\ref{tab:shap_error_1} and \ref{tab:shap_error_2} report the squared errors for each feature and demonstrate, as expected, that in the independent setting, our framework recovers the interpretation given by SHAP in a very competitive time.

\begin{table}[ht]
\centering
\caption{$100 \times ISE $ on \textsc{Car Evaluation}.}
\label{tab:shap_error_1}
\resizebox{\columnwidth}{!}{
\begin{tabular}{lcccccc}
\toprule
 & $\mathbf{X}_1$ & $\mathbf{X}_2$ & $\mathbf{X}_3$ & $\mathbf{X}_4$ & $\mathbf{X}_5$ & $\mathbf{X}_6$ \\
\midrule
Class 0 & $2.01$ & $1.30$ & $0.01$ & $0.28$ & $0.03$ & $0.30$ \\
Class 1 & $0.16$ & $0.15$ & $0.00$ & $0.04$ & $0.01$ & $0.05$ \\
Class 2 & $2.89$ & $1.46$ & $0.02$ & $0.71$ & $0.05$ & $0.71$ \\
Class 3 & $0.21$ & $0.09$ & $0.01$ & $0.04$ & $0.04$ & $0.12$ \\
\bottomrule
\end{tabular}
}
\end{table}

\begin{table}[ht]
\centering
\caption{$100 \times ISE$ on \textsc{Nursery}.}
\label{tab:shap_error_2}
\resizebox{\columnwidth}{!}{
    \begin{tabular}{lcccccccc}
    \toprule
     & $\mathbf{X}_1$ & $\mathbf{X}_2$ & $\mathbf{X}_3$ & $\mathbf{X}_4$ & $\mathbf{X}_5$ & $\mathbf{X}_6$ & $\mathbf{X}_7$ & $\mathbf{X}_8$ \\
    \midrule
    Class 0 & $0.0$ & $0.0$ & $0.0$ & $0.0$ & $0.0$ & $0.0$ & $0.0$ & $0.0$ \\
    Class 1 & $1.6$ & $3.7$ & $0.3$ & $0.3$ & $0.2$ & $0.0$ & $0.3$ & $1.8$ \\
    Class 2 & $0.0$ & $0.0$ & $0.0$ & $0.0$ & $0.0$ & $0.0$ & $0.0$ & $0.0$ \\
    Class 3 & $1.9$ & $4.4$ & $0.2$ & $0.0$ & $0.1$ & $0.0$ & $0.1$ & $1.7$ \\
    Class 4 & $0.2$ & $0.3$ & $0.1$ & $0.1$ & $0.0$ & $0.0$ & $0.1$ & $0.3$ \\
    \bottomrule
    \end{tabular}
}
\end{table}

\paragraph{Ground truth study.} 
We study the \textsc{Mushrooms} dataset as a benchmark to validate our approach with a known ground truth, specifically noting the decisive role of features such as \emph{odor}. This dataset represents a typical high-dimensional sparse categorical problem: it contains $22$ variables leading to a hypergrid of $\vert E\vert \approx 10^{14}$ configurations, yet only $8 \, 124$ samples are observed. Our framework confirms the underlying simplicity of this structure. Indeed, the functional ANOVA decomposition of the probability to classify $1$ reveals that the main effects are sufficient to fully reconstruct the signal, achieving an $R^2 \approx 1$ with a negligible MSE of $10^{-15}$ for $0.3$ s computation time. Using previous notations, we are in the typical case where $r_{\text{low}} \ll r \ll \vert E \vert$, where $r_{\text{low}}$ is the low-rank approximation equal here to $86$ and $r$ is the cardinality of the support $\mathcal{X}$ equal to $8\,124$. Finally, we compute the global feature importance in $\ell_1$ norm (see Fig. \ref{fig:l1_importance_mushrooms}) by integrating the main effects for each feature $i \in [d]$:
\begin{equation}
    \| f_i \|_1 \coloneqq \sum\limits_{\mathbf{x} \in \mathcal{X}} \left\vert f_i(\mathbf{x}_i) \right\vert \cdot p_i( \mathbf{x}_i ).
\end{equation}
We compare our method with Observational TreeSHAP \citep{lundberg2018consistent}. Both methods recover that \emph{odor} (Feature 5), \emph{Gill Color} (Feature 9) and \emph{Spore Print Color} (Feature 20) are the most important features in this dataset.
\begin{figure}[ht]
    \centering
    \includegraphics[width=1.0\linewidth]{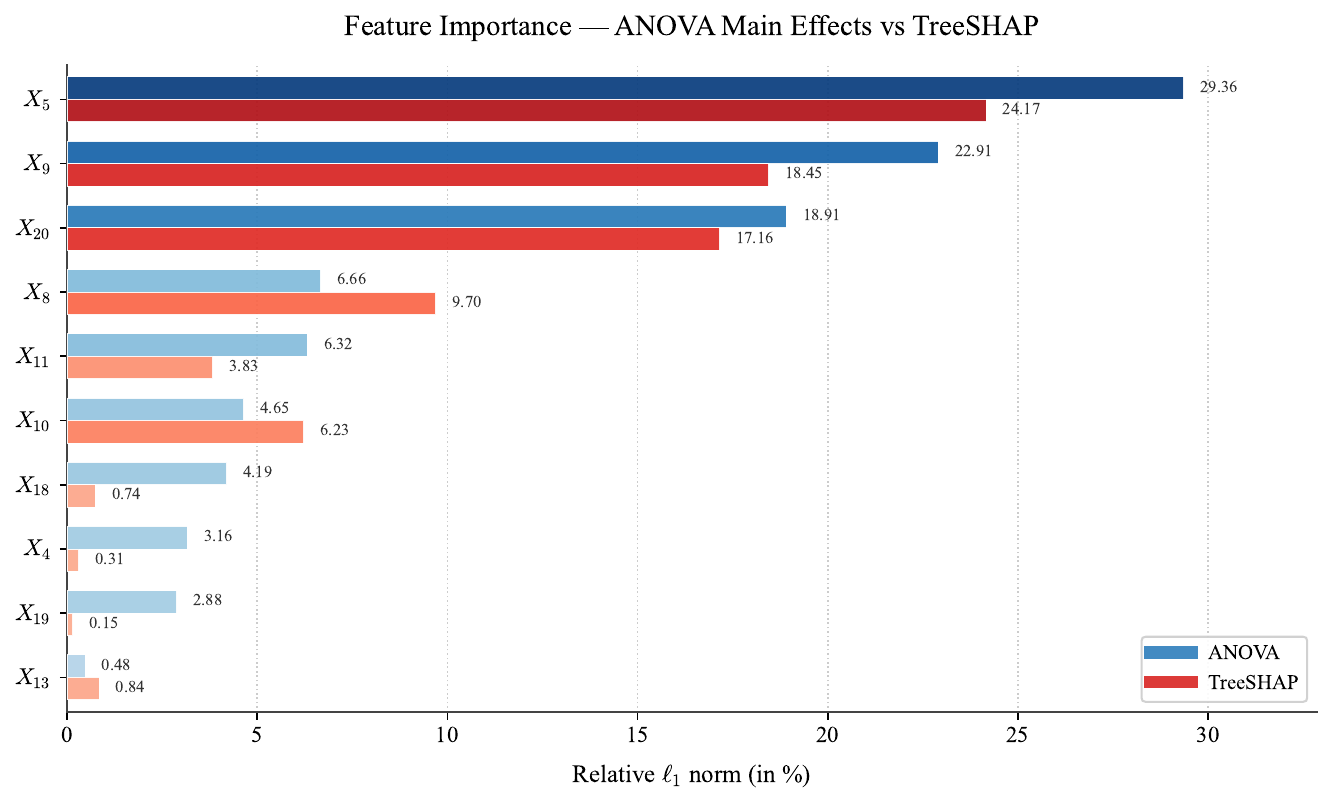}
    \caption{Global Feature Importance on \textsc{Mushrooms} dataset.}
    \label{fig:l1_importance_mushrooms}
\end{figure}
The main differences arise because TreeSHAP exploits internal tree paths, while our method only exploits the distribution of the data in a black-box setting.

\paragraph{High dimensional datasets.}
We evaluate our framework on three datasets characterized by high dimensionality and significant sparsity, as detailed in Table~\ref{tab:big_datasets}. In these regimes, the input hypergrid $E$ suffers from a combinatorial explosion, rendering the number of observed samples negligible compared to the theoretical volume ($r \ll \vert E \vert$). Consequently, the \textbf{full support assumption} of the joint density is violated, preventing the direct application of methods like TreeHFD. Furthermore, high feature correlations in these datasets pose significant challenges for standard Shapley-value based estimators.

\begin{table}[h]
\centering
\caption{Characteristics of large-scale sparse datasets.}
\label{tab:big_datasets}
\resizebox{0.8\columnwidth}{!}{
\begin{tabular}{lrrr}
\toprule
\textsc{Dataset} & $d$ & $r$ & $\left\vert E \right\vert$ \\
\midrule
\textsc{Poker Hand} & 10 & 25\,000 & $52^5$ \\
\textsc{Connect-4} & 42 & 67\,000 & $3^{42}$ \\
\textsc{Dota2 Results} & 113 & 102\,000 & $3^{115}$ \\
\bottomrule
\end{tabular}
}
\end{table}
For each task, we trained a specific black-box model to serve as the target function $f$: a Tabular Transformer (\textsc{Poker}), a Random Forest (\textsc{Connect-4}), and a very deep fully-connected neural network (\textsc{Dota2}). We then applied our decomposition framework under the empirical distribution $\widehat{\mathcal{D}}_n$ induced by each dataset.

Table~\ref{tab:stacked_results} reports the performance in two distinct settings. In the top panel, we restrict the decomposition to the subspace of main effects. We denote by $r_{ \text{main} }$ the number of indices in $\mathcal I$ to exactly recover all the main effects. Including the intercept, the candidate count is $1 + (N_1 - 1) + \dots + (N_d - 1)$. In the full support case, $r_{ \text{main} }$ equals this quantity. In the sparse setting, Algorithm \ref{algo:greedy_rank} discards dependent functions via the rank check. Our method rapidly isolates these dominant interactions (few seconds), demonstrating efficiency even in high dimensions.

In the bottom panel, we increase the rank approximation to a higher threshold ($r_{\text{high}}$) to evaluate the performance of our method. While increasing the rank improves the explained variance ($R^2$), these results highlight the current computational boundary of our framework. The sheer size of the configuration space in these massive tabular tasks makes high-fidelity reconstruction challenging. The greedy rank selection strategy becomes a bottleneck, suggesting a trade-off between runtime and accuracy when handling vector spaces of this magnitude.

\begin{table}[ht]
\centering
\caption{Performance \& Metrics. \textbf{Top:} Fast approximation restricted to main effects ($r_{\text{main}}$). \textbf{Bottom:} Higher rank approximation ($r_{\text{low}}$) for reconstruction analysis.}
\label{tab:stacked_results}

\resizebox{\columnwidth}{!}{%
    \begin{tabular}{lccccc}
    \toprule
    \textsc{Dataset} & $r_{\text{main}}$ & Time & $R^2$ & MSE & Rel. MSE \\
    \midrule
    \textsc{Poker Hand} & 76 & 1.8s & $0.003$ & $0.24$ & $49 \%$  \\
    \textsc{Connect-4} & 83 & 10s & $0.45$ & $0.07$ & $13 \%$ \\
    \textsc{Dota2} & 112 & 23s & $0.36$ & $0.02$ & $6 \%$ \\
    \bottomrule
    \end{tabular}%
}

\vspace{0.1cm} 

\resizebox{\columnwidth}{!}{%
    \begin{tabular}{lccccc}
    \toprule
    \textsc{Dataset} & $r_{\text{high}}$ & Time & $R^2$ & MSE & Rel. MSE \\
    \midrule
    \textsc{Poker Hand} & 5\,000 & 10min & $0.79$ & $0.05$ & $10 \%$  \\
    \textsc{Connect-4} & 5\,000 & 37min & $0.70$ & $0.04$ & $7 \%$ \\
    \textsc{Dota2} & 4\,000 & 39min & $0.41$ & $0.01$ & $5 \%$ \\
    \bottomrule
    \end{tabular}%
}
\end{table}

The reported runtime represents a \emph{worst-case} scenario for our current greedy implementation. Despite this, the decomposition remains highly competitive: it processes the full $67\,000$-sized dataset in under 40 minutes. Furthermore, when leveraging the inherent spatial structure of the data to optimize the interaction search space, our framework achieves record-breaking efficiency. As illustrated on the experiment below, this optimization allows us to compute a good approximation of the decomposition and explain all $60\,000$ samples in a very competitive time.

\paragraph{Feature importance on \textsc{Poker Hand}.}
Figure~\ref{fig:l1_importance_poker} compares the global feature importance
produced by our framework with that of attention rollout~\citep{abnar2020quantifying}. Because the rank approximation is necessary in this high-dimensional sparse regime, we set $r_{\text{low}} \coloneqq 5\,000$, which yields the decomposition $f = \sum_{A} f_A + f_{\text{res}}$, where $\sum_A f_A$ is the explainable part reconstructed from the $5\,000$ selected basis elements and $f_{\text{res}}$ denotes the unexplained residual. The corresponding local Shapley value for the $i^{\text{th}}$ feature is then defined by attributing each component $f_A$ equally among its members and uniformly distributing the residual across all features:
\begin{equation}
    \mathrm{shap}_i \coloneqq \sum_{A \,:\, A \ni i} \frac{f_A}{|A|}
    + \frac{f_{\text{res}}}{d}.
\end{equation}
Global indices are obtained by aggregating these local attributions in
$\ell_1$ norm and normalizing the resulting vector, yielding the
percentage contributions reported in Figure~\ref{fig:l1_importance_poker}.

\begin{figure}[ht]
    \centering
    \includegraphics[width=1.0\linewidth]{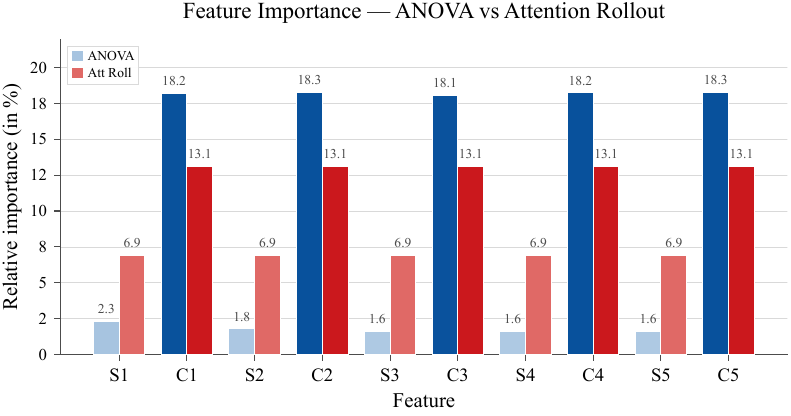}
    \caption{Global feature importance on the \textsc{Poker Hand} dataset.}
    \label{fig:l1_importance_poker}
\end{figure}

In poker hand classification, the label is determined almost exclusively by
card ranks ($C_i$) rather than suits ($S_i$), a well-established property of
the game. Our method recovers this structure sharply, assigning roughly an
order of magnitude more importance to ranks than to suits. In contrast,
attention rollout distributes attribution far more uniformly across the two
groups, substantially overestimating the contribution of suits.

\paragraph{Binarized MNIST dataset.}

We apply our framework to a MLP trained on Binarized MNIST ($60 \, 000 \times 784$ shaped binary tabular dataset). This high-dimensional setting allows for visual interpretation, focusing on the predicted probability of the digit '3' (see Fig. \ref{fig:mnist_analysis}). Table \ref{tab:mnist} presents our performance results on this big tabular dataset. We achieve high computational efficiency on this dataset by leveraging spatial structure to selectively include relevant subsets. Specifically, we first screen singletons to identify inactive variables (almost surely black pixels) and prune any subset $A$ containing them. Furthermore, we rank active variables by variance to prioritize high-impact coalitions and avoid inefficient searches. Finally, for each variable $i$, we search for interactions within its local spatial neighborhood rather than testing random combinations.

\begin{table}[ht]
\centering
\caption{Performance \& Metrics.}
\label{tab:mnist}
\resizebox{1.0\columnwidth}{!}{
\begin{tabular}{lccccc}
    \toprule
     & $r_{\text{low}}$ & Time & $R^2$ & MSE & Rel. MSE \\
    \midrule
    Main Effects & 674 & 10s & $0.54$ & $0.04$ & $41 \%$  \\
    Low rank & 2\,500 & 90s & $0.77$ & $0.02$ & $20 \%$ \\
    Middle rank & 5\,000 & 300s & $0.83$ & $0.01$ & $15 \%$ \\
    High rank & 10\,000 & 15min & $0.86$ & $0.01$ & $12 \%$ \\
    \bottomrule
    \end{tabular}
}
\end{table}

\section{Discussion}
\paragraph{Conclusion.}We have presented a rigorous framework for computing the exact functional ANOVA decomposition for categorical inputs in a pure \emph{black-box} setting. Our explicit formulation is robust to arbitrary dependence structures and support constraints, preserving the Generalized Functional ANOVA properties. Our approach offers a significant paradigm shift in computational efficiency and incurs a one-time global computation cost. Once the decomposition is obtained, explanations for any number of samples are available instantaneously. Furthermore, this general framework not only encompasses standard results in independent settings but also provides a powerful, theoretically grounded counterpart to approximation-based methods for additive explanation.

\paragraph{Limitations \& Future Work.} Our framework inherits the computational cost intrinsic to ANOVA-style decompositions. In the dense regime, the exact algorithm requires solving a linear system of full hypergrid size, which quickly becomes prohibitive in high dimension; in the sparse regime, the system size reduces to the observed support, but the greedy rank search then takes over as the dominant bottleneck. Our results on Binarized MNIST demonstrate that this general algorithm can be significantly optimized by leveraging the spatial structure of the data, confirming that incorporating domain knowledge effectively mitigates the curse of dimensionality. Future iterations will focus on formally integrating such structural constraints to further improve computational efficiency, and on extending this exact decomposition framework to continuous domains. A promising algorithmic direction to alleviate the computational complexity is to bypass the greedy selection altogether. One can directly assemble all basis functions up to a prescribed interaction order into a single system and apply sparsity-promoting model selection (e.g., LASSO or LARS) to discard redundant columns. The resulting pruned, full-rank system can then be solved efficiently.

A second limitation, conceptually distinct from the algorithm itself, concerns the non-uniqueness of the decomposition in the sparse regime. Classical uniqueness results for the functional ANOVA decomposition rely on density assumptions (boundedness away from zero) that collapse in the discrete case with finite, possibly sparse, support---a gap the present work begins to fill. In this regime, the overcomplete dictionary is rank-deficient and admits a family of equivalent representations. This non-uniqueness is a geometric property of the data support rather than a methodological artefact: any ANOVA or GAM-like decomposition on sparse categorical inputs is subject to the same identifiability issue. It raises two natural research directions: first, identifying an optimal criterion (e.g., sparsity-promoting or information-theoretic) for selecting a meaningful decomposition among all valid ones; second, developing a principled full-support modeling strategy that restores uniqueness while preserving component reliability.


\section*{Acknowledgements}
The authors are grateful for the feedback by the anonymous reviewers, that led to considerable improvement of the paper. This work was partially supported by the French \emph{Association Nationale de la Recherche et de la Technologie} (ANRT) through a CIFRE PhD project at \'Electricité de France (EDF). Fabrice Gamboa and Jean-Michel Loubes acknowledge support from the ANR-3IA Artificial and Natural Intelligence Toulouse Institute (ANITI).

\newpage
\section*{Impact Statement}
This paper presents work whose goal is to advance the field of trustworthy machine
learning. Explicit generalized functional ANOVA enables a theoretical and practical improvement in the post-hoc mechanistic interpretability of complex models with categorical inputs. This yields the ability to produce fine-grained attributions to the information sources used by such a model. This could help improve the selection and refinement of these models, as well as the trust one can place in them.

\bibliography{biblio}

\begin{thebibliography}{33}
\providecommand{\natexlab}[1]{#1}
\providecommand{\url}[1]{\texttt{#1}}
\expandafter\ifx\csname urlstyle\endcsname\relax
  \providecommand{\doi}[1]{doi: #1}\else
  \providecommand{\doi}{doi: \begingroup \urlstyle{rm}\Url}\fi

\bibitem[Abnar \& Zuidema(2020)Abnar and Zuidema]{abnar2020quantifying}
Abnar, S. and Zuidema, W.
\newblock Quantifying attention flow in transformers.
\newblock In \emph{Proceedings of the 58th annual meeting of the association for computational linguistics}, pp.\  4190--4197, 2020.

\bibitem[B{\'e}nard(2025)]{benard2025tree}
B{\'e}nard, C.
\newblock Tree ensemble explainability through the hoeffding functional decomposition and treehfd algorithm.
\newblock \emph{Advances in Neural Information Processing Systems}, 2025.

\bibitem[Chastaing et~al.(2012)Chastaing, Gamboa, and Prieur]{chastaing_generalized_2012}
Chastaing, G., Gamboa, F., and Prieur, C.
\newblock {Generalized {H}oeffding-Sobol Decomposition for Dependent Variables – Application to Sensitivity Analysis}.
\newblock \emph{Electronic Journal of Statistics}, 6:\penalty0 2420--2448, March 2012.
\newblock \doi{10.1214/12-EJS749}.
\newblock URL \url{https://projecteuclid.org/euclid.ejs/1356098617}.

\bibitem[Chen et~al.(2015)Chen, He, Benesty, Khotilovich, Tang, Cho, Chen, Mitchell, Cano, Zhou, et~al.]{chen2015xgboost}
Chen, T., He, T., Benesty, M., Khotilovich, V., Tang, Y., Cho, H., Chen, K., Mitchell, R., Cano, I., Zhou, T., et~al.
\newblock Xgboost: extreme gradient boosting.
\newblock \emph{R package version 0.4-2}, 1\penalty0 (4):\penalty0 1--4, 2015.

\bibitem[Cheng et~al.(2025)Cheng, Jia, Zhou, Li, and Guo]{cheng2025tabfsbench}
Cheng, Z.-J., Jia, Z.-Y., Zhou, Z., Li, Y.-F., and Guo, L.-Z.
\newblock Tabfsbench: Tabular benchmark for feature shifts in open environments.
\newblock In \emph{International Conference on Machine Learning}, 2025.

\bibitem[Gorishniy et~al.(2021)Gorishniy, Rubachev, Khrulkov, and Babenko]{NEURIPS2021_9d86d83f}
Gorishniy, Y., Rubachev, I., Khrulkov, V., and Babenko, A.
\newblock Revisiting deep learning models for tabular data.
\newblock In Ranzato, M., Beygelzimer, A., Dauphin, Y., Liang, P., and Vaughan, J.~W. (eds.), \emph{Advances in Neural Information Processing Systems}, volume~34, pp.\  18932--18943. Curran Associates, Inc., 2021.
\newblock URL \url{https://proceedings.neurips.cc/paper_files/paper/2021/file/9d86d83f925f2149e9edb0ac3b49229c-Paper.pdf}.

\bibitem[Gorji et~al.(2024)Gorji, Amrollahi, and Krause]{gorji2024shap}
Gorji, A., Amrollahi, A., and Krause, A.
\newblock Shap values via sparse {Fourier} representation.
\newblock \emph{Advances in Neural Information Processing Systems}, 2024.

\bibitem[Harris et~al.(2020)Harris, Millman, Van Der~Walt, Gommers, Virtanen, Cournapeau, Wieser, Taylor, Berg, Smith, et~al.]{harris2020array}
Harris, C.~R., Millman, K.~J., Van Der~Walt, S.~J., Gommers, R., Virtanen, P., Cournapeau, D., Wieser, E., Taylor, J., Berg, S., Smith, N.~J., et~al.
\newblock {Array programming with NumPy}.
\newblock \emph{Nature}, 585\penalty0 (7825):\penalty0 357--362, 2020.

\bibitem[Harsanyi(1963)]{harsanyi_1963}
Harsanyi, J.~C.
\newblock A simplified bargaining model for the n-person cooperative game.
\newblock \emph{International Economic Review}, 4\penalty0 (2):\penalty0 194--220, 1963.
\newblock ISSN 00206598, 14682354.
\newblock URL \url{http://www.jstor.org/stable/2525487}.

\bibitem[Hoeffding(1948)]{hoeffding_class_1948}
Hoeffding, W.
\newblock A class of statistics with asymptotically normal distribution.
\newblock \emph{Annals of Mathematical Statistics}, 19\penalty0 (3):\penalty0 293--325, 1948.
\newblock \doi{10.1214/aoms/1177730196}.
\newblock URL \url{https://projecteuclid.org/euclid.aoms/1177730196}.

\bibitem[Hooker(2007)]{hooker_2007}
Hooker, G.
\newblock Generalized functional anova diagnostics for high-dimensional functions of dependent variables.
\newblock \emph{Journal of Computational and Graphical Statistics}, 16\penalty0 (3):\penalty0 709--732, 2007.
\newblock ISSN 10618600.
\newblock URL \url{http://www.jstor.org/stable/27594267}.

\bibitem[{Il~Idrissi}(2024)]{ilidrissi:tel-04674771}
{Il~Idrissi}, M.
\newblock \emph{{Development of interpretability methods for certifying machine learning models applied to critical systems}}.
\newblock Theses, {Universit{\'e} de Toulouse}, 2024.
\newblock URL \url{https://theses.hal.science/tel-04674771}.

\bibitem[{Il~Idrissi} et~al.(2025){Il~Idrissi}, Bousquet, Gamboa, Iooss, and Loubes]{IlIdrissi2023}
{Il~Idrissi}, M., Bousquet, N., Gamboa, F., Iooss, B., and Loubes, J.-M.
\newblock Hoeffding decomposition of functions of random dependent variables.
\newblock \emph{Journal of Multivariate Analysis}, pp.\  105444, 2025.

\bibitem[Janzing et~al.(2020)Janzing, Minorics, and Bl{\"o}baum]{janzing2020feature}
Janzing, D., Minorics, L., and Bl{\"o}baum, P.
\newblock Feature relevance quantification in explainable ai: A causal problem.
\newblock In \emph{International Conference on artificial intelligence and statistics}, pp.\  2907--2916. PMLR, 2020.

\bibitem[Kairgeldin \& Carreira-Perpi\~{n}\'{a}n(2024)Kairgeldin and Carreira-Perpi\~{n}\'{a}n]{Kairgeldin2024}
Kairgeldin, R. and Carreira-Perpi\~{n}\'{a}n, M.~A.
\newblock Bivariate decision trees: Smaller, interpretable, more accurate.
\newblock In \emph{Proceedings of the 30th ACM SIGKDD Conference on Knowledge Discovery and Data Mining}, KDD '24, pp.\  1336–1347, New York, NY, USA, 2024. Association for Computing Machinery.
\newblock ISBN 9798400704901.
\newblock \doi{10.1145/3637528.3671903}.
\newblock URL \url{https://doi.org/10.1145/3637528.3671903}.

\bibitem[Kelly et~al.(2023)Kelly, Longjohn, and Nottingham]{kelly2023uci}
Kelly, M., Longjohn, R., and Nottingham, K.
\newblock The {UCI} machine learning repository, 2023.

\bibitem[Khemakhem et~al.(2020)Khemakhem, Kingma, Monti, and Hyvarinen]{khemakhem2020variational}
Khemakhem, I., Kingma, D., Monti, R., and Hyvarinen, A.
\newblock Variational autoencoders and nonlinear ica: A unifying framework.
\newblock In \emph{International Conference on Artificial Intelligence and Statistics}, pp.\  2207--2217, 2020.

\bibitem[LeCun \& Cortes(1998)LeCun and Cortes]{lecun1998mnist}
LeCun, Y. and Cortes, C.
\newblock The mnist database of handwritten digits.
\newblock \emph{http://yann.lecun.com/exdb/mnist/}, 1998.

\bibitem[Lengerich et~al.(2020)Lengerich, Tan, Chang, Hooker, and Caruana]{lengerich2020purifying}
Lengerich, B., Tan, S., Chang, C.-H., Hooker, G., and Caruana, R.
\newblock {Purifying interaction effects with the functional ANOVA: An efficient algorithm for recovering identifiable additive models}.
\newblock In \emph{International Conference on Artificial Intelligence and Statistics}, pp.\  2402--2412. PMLR, 2020.

\bibitem[Liang et~al.(2023)Liang, Cheng, Fan, Ling, Nie, Chen, Deng, Allen, Auerbach, Mahmood, et~al.]{liang2023quantifying}
Liang, P.~P., Cheng, Y., Fan, X., Ling, C.~K., Nie, S., Chen, R., Deng, Z., Allen, N., Auerbach, R., Mahmood, F., et~al.
\newblock Quantifying and modeling multimodal interactions: An information decomposition framework.
\newblock \emph{Advances in Neural Information Processing Systems}, 36:\penalty0 27351--27393, 2023.

\bibitem[Lundberg \& Lee(2017)Lundberg and Lee]{lundberg2017unified}
Lundberg, S.~M. and Lee, S.-I.
\newblock A unified approach to interpreting model predictions.
\newblock \emph{Advances in neural information processing systems}, 30, 2017.

\bibitem[Lundberg et~al.(2018)Lundberg, Erion, and Lee]{lundberg2018consistent}
Lundberg, S.~M., Erion, G.~G., and Lee, S.-I.
\newblock Consistent individualized feature attribution for tree ensembles.
\newblock \emph{arXiv:1802.03888}, 2018.

\bibitem[Matteucci et~al.(2023)Matteucci, Arzamasov, and B{\"o}hm]{matteucci2023benchmark}
Matteucci, F., Arzamasov, V., and B{\"o}hm, K.
\newblock A benchmark of categorical encoders for binary classification.
\newblock \emph{Advances in neural information processing systems}, 36:\penalty0 54855--54875, 2023.

\bibitem[O'Donnell(2014)]{o2014analysis}
O'Donnell, R.
\newblock \emph{Analysis of boolean functions}.
\newblock Cambridge University Press, 2014.

\bibitem[Owen(2014)]{owen_shapley_2014}
Owen, A.~B.
\newblock Sobol' indices and shapley value.
\newblock \emph{{SIAM}/{ASA} Journal on Uncertainty Quantification}, 2\penalty0 (1):\penalty0 245--251, 2014.
\newblock ISSN 2166-2525.
\newblock \doi{10.1137/130936233}.
\newblock URL \url{http://epubs.siam.org/doi/10.1137/130936233}.

\bibitem[Owen \& Prieur(2017)Owen and Prieur]{owen_shapley_2017}
Owen, A.~B. and Prieur, C.
\newblock On shapley value for measuring importance of dependent inputs.
\newblock \emph{{SIAM}/{ASA} Journal on Uncertainty Quantification}, 5\penalty0 (1):\penalty0 986--1002, 2017.
\newblock ISSN 2166-2525.
\newblock \doi{10.1137/16M1097717}.
\newblock URL \url{https://epubs.siam.org/doi/10.1137/16M1097717}.

\bibitem[Paszke et~al.(2019)Paszke, Gross, Massa, Lerer, Bradbury, Chanan, Killeen, Lin, Gimelshein, Antiga, et~al.]{paszke2019pytorch}
Paszke, A., Gross, S., Massa, F., Lerer, A., Bradbury, J., Chanan, G., Killeen, T., Lin, Z., Gimelshein, N., Antiga, L., et~al.
\newblock Pytorch: An imperative style, high-performance deep learning library.
\newblock \emph{Advances in neural information processing systems}, 32, 2019.

\bibitem[Pedregosa et~al.(2011)Pedregosa, Varoquaux, Gramfort, Michel, Thirion, Grisel, Blondel, Prettenhofer, Weiss, Dubourg, et~al.]{pedregosa2011scikit}
Pedregosa, F., Varoquaux, G., Gramfort, A., Michel, V., Thirion, B., Grisel, O., Blondel, M., Prettenhofer, P., Weiss, R., Dubourg, V., et~al.
\newblock {Scikit-learn: Machine learning in Python}.
\newblock \emph{Journal of Machine Learning Research}, 12:\penalty0 2825--2830, 2011.

\bibitem[Rota(1964)]{rota1964foundations}
Rota, G.-C.
\newblock On the foundations of combinatorial theory: I. theory of {M{\"o}bius} functions.
\newblock In \emph{Classic Papers in Combinatorics}, pp.\  332--360. Springer, 1964.

\bibitem[Shapley(1953)]{Shapley_1953}
Shapley, L.~S.
\newblock \emph{A Value for n-Person Games}, pp.\  307--318.
\newblock Princeton University Press, Princeton, 1953.
\newblock URL \url{https://doi.org/10.1515/9781400881970-018}.

\bibitem[Stone(1994)]{stone1994use}
Stone, C.~J.
\newblock The use of polynomial splines and their tensor products in multivariate function estimation.
\newblock \emph{The annals of statistics}, pp.\  118--171, 1994.

\bibitem[Sundararajan \& Najmi(2020)Sundararajan and Najmi]{sundararajan2020many}
Sundararajan, M. and Najmi, A.
\newblock The many shapley values for model explanation.
\newblock In \emph{International conference on machine learning}, pp.\  9269--9278. PMLR, 2020.

\bibitem[Van~den Broeck et~al.(2022)Van~den Broeck, Lykov, Schleich, and Suciu]{van2022tractability}
Van~den Broeck, G., Lykov, A., Schleich, M., and Suciu, D.
\newblock On the tractability of shap explanations.
\newblock \emph{Journal of Artificial Intelligence Research}, 74:\penalty0 851--886, 2022.

\end{thebibliography}
\bibliographystyle{icml2026}

\newpage
\appendix
\onecolumn
\section*{Appendices}
\section{Proofs}

\subsection{Proof of Theorem \ref{thm:thm_main}}

The proof of this theorem is long so it will be split into three lemmas. First, we show that the collection $\mathcal{F}$ linearly spans $L^2$ using the two first lemmas. Then, in the last one, we prove that the elements of $\mathcal{F}$ satisfy the hierarchical orthogonality condition \eqref{eq:orthogonality}. Using that $\mathcal{F}$ spans $L^2$, the Fourier expansion \eqref{eq_representation} becomes direct. Then, with the hierarchical orthogonality condition satisfied by all the $\phi_A^{(\mathbf z)}$, the corresponding functions $f_A$ satisfy also this condition.

\begin{lemma}
    For all $i \in [d]$ and $z_i \in \{0, \dots, N_i - 2\}$ we introduce the function $\psi_i^{(z_i)}$ defined by
    \begin{equation}\label{eq:def_psi}
        \psi_i^{(z_i)} \coloneqq \mathbf{1}\{ \mathbf{X}_i = z_i \} - \mathbf{1}\{ \mathbf{X}_i = N_i - 1 \}.
    \end{equation}
    By convention, we also introduce $\psi_{\emptyset} \coloneqq 1$ almost surely. For all $i \in [d]$, we denote $\mathcal{G}_i$ by
    \begin{equation}
        \mathcal{G}_i \coloneqq \left\{ \psi_{\emptyset}, \psi_{i}^{(0)}, \psi_{i}^{(1)}, \dots, \psi_{i}^{(N_i - 2)} \right\}.
    \end{equation}
    Note that $\mathcal{G}_i$ always contains exactly $N_i$ elements for $N_i \geq 1$. According to these notations, one has that $\mathcal{G}_i$ span the vector space of function of $\mathbf{X}_i$.
\end{lemma}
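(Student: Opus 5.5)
The plan is to show that every indicator $\mathbf{1}\{\mathbf{X}_i = k\}$, $k \in \{0,\dots,N_i-1\}$, lies in the linear span of $\mathcal{G}_i$. The space of functions of $\mathbf{X}_i$ is spanned by these $N_i$ indicators: any $g(\mathbf{X}_i)$ equals $\sum_k g(k)\,\mathbf{1}\{\mathbf{X}_i=k\}$. So this will conclude.

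First I handle the last category. Since the indicators form a partition of unity, we have $\psi_\emptyset = 1 = \sum_{k=0}^{N_i-1} \mathbf{1}\{\mathbf{X}_i = k\}$. Summing the definitions \eqref{eq:def_psi} over $z_i = 0,\dots,N_i-2$ gives
\begin{equation*}
\sum_{z_i=0}^{N_i-2} \psi_i^{(z_i)} = \bigl(1 - \mathbf{1}\{\mathbf{X}_i = N_i-1\}\bigr) - (N_i-1)\,\mathbf{1}\{\mathbf{X}_i = N_i-1\} = \psi_\emptyset - N_i\,\mathbf{1}\{\mathbf{X}_i = N_i-1\}.
\end{equation*}
Hence $\mathbf{1}\{\mathbf{X}_i = N_i-1\} = \frac{1}{N_i}\bigl(\psi_\emptyset - \sum_{z_i} \psi_i^{(z_i)}\bigr)$, which lies in $\mathrm{span}(\mathcal{G}_i)$.

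Next, for each $k \le N_i-2$, we have $\mathbf{1}\{\mathbf{X}_i=k\} = \psi_i^{(k)} + \mathbf{1}\{\mathbf{X}_i = N_i-1\}$, which also lies in the span. When $N_i=1$, $\mathcal{G}_i=\{\psi_\emptyset\}$ and the only function of $\mathbf{X}_i$ is a constant, so the claim is trivial.

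There is no real obstacle here; the only subtlety is the "almost surely" caveat. If some category has zero probability, the indicators restricted to the support may coincide or vanish, so the space of functions of $\mathbf{X}_i$ in $L^2$ has dimension at most $N_i$. The same identities still hold pointwise on $E$, hence on $\mathcal{X}$, so spanning is unaffected. Linear independence is not claimed and is not needed.
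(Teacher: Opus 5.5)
Your proposal is correct and follows essentially the same route as the paper. Like the paper, you sum the $\psi_i^{(z)}$ to get $\mathbf{1}\{\mathbf{X}_i = N_i-1\} = \frac{1}{N_i}\bigl(\psi_\emptyset - \sum_{z} \psi_i^{(z)}\bigr)$, then write each remaining indicator as $\psi_i^{(k)} + \mathbf{1}\{\mathbf{X}_i = N_i-1\}$; your remarks on $N_i=1$ and on restricting to the support are valid additions.
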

\begin{proof}
    To prove this result, one just has to show the following identity
    \begin{equation}
        \forall x_i \in \{0, \dots, N_i - 1\}, \: \mathbf{1}\left\{ \mathbf{X}_i = x_i\right\} \in \mathrm{span}\left( \mathcal{G}_i \right)
    \end{equation}
    By definition of $\psi_i^{(\cdot)}$, one can sum the equation \eqref{eq:def_psi} over all $z \in \{0, \dots, N_i - 2\}$ and then has
    \begin{equation}
        \mathbf{1}\left\{ \mathbf{X}_i = N_i - 1 \right\} = \frac{1}{N_i} \cdot \psi_{\emptyset} - \frac{1}{N_i} \cdot \sum\limits_{z = 0}^{N_i - 2} \psi_{i}^{(z)} 
    \end{equation}
    Then this equation can be used to show that
    \begin{equation}
        \forall z^{\star} < N_i - 1, \: \mathbf{1}\left\{ \mathbf{X}_i = z^{\star} \right\} = \psi_i^{(z^{\star})} + \frac{1}{N_i} \cdot \psi_{\emptyset} - \frac{1}{N_i} \cdot \sum\limits_{z = 0}^{N_i - 2} \psi_{i}^{(z)} 
    \end{equation}
    Finally, that concludes the proof.
\end{proof}

\begin{lemma}
    The collection of functions $\mathcal{F}$ is a family of vectors that linearly span the space $L^2$.
\end{lemma}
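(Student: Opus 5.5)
Since $L^2$ consists of all real functions on the finite support $\mathcal{X}\subseteq E$, it suffices to show that every indicator $\mathbf{1}\{\mathbf{X}=\mathbf{x}\}$, for $\mathbf{x}\in\mathcal{X}$, lies in $\mathrm{span}(\mathcal{F})$. I will do this in two steps: first build tensor products of the one-dimensional generators from the previous lemma, then absorb the denominators $p_A(\mathbf{x}_A)$ appearing in $\phi_A^{(\mathbf{z})}$.

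\emph{Step 1: tensorization.} Write $\mathbf{1}\{\mathbf{X}=\mathbf{x}\}=\prod_{i=1}^d \mathbf{1}\{\mathbf{X}_i=x_i\}$. By the previous lemma, each factor is a linear combination of $\psi_\emptyset=1$ and the $\psi_i^{(z_i)}$, $z_i\in\{0,\dots,N_i-2\}$. Expanding the product, we get a linear combination of functions of the form
\[
\Psi_A^{(\mathbf{z})}\coloneqq \prod_{i\in A}\psi_i^{(z_i)}, \qquad A\subseteq[d],\ \mathbf{z}\in E_{A^-}.
\]
For each $i$ we choose either the constant term ($i\notin A$) or some $\psi_i^{(z_i)}$ ($i\in A$). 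Hence the products $\Psi_A^{(\mathbf{z})}$, indexed exactly by $\mathcal{I}$, span $L^2$. This is the standard fact that tensor products of spanning sets span the tensor product space.

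\emph{Step 2: removing the denominators.} Note that $\phi_A^{(\mathbf{z})}=\Psi_A^{(\mathbf{z})}/p_A(\mathbf{X}_A)$. The difficulty is that $1/p_A$ is not a constant, so the two spans are not trivially equal. My plan is to show each $\Psi_A^{(\mathbf{z})}$ lies in $\mathrm{span}(\mathcal{F})$ by expressing it as a combination of the $\phi_B$, working on the finite support. On $\mathcal{X}$, the function $p_A(\mathbf{X}_A)\Psi_A^{(\mathbf{z})}$ depends only on $\mathbf{X}_A$, so by Step 1 applied to the variables in $A$ alone it is a combination of $\Psi_B^{(\mathbf{w})}$ with $B\subseteq A$. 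That is the wrong direction, so instead I argue directly. The function $\phi_A^{(\mathbf{z})}$ is supported on $\{\mathbf{x}: x_i\in\{z_i,N_i-1\}\ \forall i\in A\}$, and it equals $\pm 1/p_A(\mathbf{x}_A)$ there. I will run an induction on $|A|$ together with a triangular ordering of the configurations in $E_A$. The aim is to show that the span of $\{\phi_B^{(\mathbf{w})}: B\subseteq A\}$ contains every indicator $\mathbf{1}\{\mathbf{X}_A=\mathbf{a}\}$ with $\mathbf{a}$ in the support of $\mathbf{X}_A$.

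The key mechanism is this. For a fixed $\mathbf{a}\in E_{A^-}$ (all coordinates below the top category), $\phi_A^{(\mathbf{a})}$ equals $\mathbf{1}\{\mathbf{X}_A=\mathbf{a}\}/p_A(\mathbf{a})$ plus terms supported on configurations where some coordinate takes the value $N_i-1$. Those extra terms are functions of $\mathbf{X}_A$ that are indicators on points sharing a top-category coordinate. I will peel them off by induction on the number of top-category coordinates, using lower-order $\phi_B$ with $B=\{i: a_i\neq N_i-1\}$ to generate indicators of such points. Note that $\mathbf{1}\{\mathbf{X}_A=\mathbf{a}\}$ with some $a_i=N_i-1$ is not directly of the form $\phi$, so the induction needs to be set up carefully via the inclusion–exclusion relation used in the previous lemma.

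\emph{Fallback.} If this triangular peeling becomes messy, I will use a dimension and orthogonality argument instead. Suppose $g\in L^2$ is orthogonal to all of $\mathcal{F}$. Then $\langle g,\phi_A^{(\mathbf{z})}\rangle=\sum_{\mathbf{x}_A}\mathbb{E}[g\mid\mathbf{X}_A=\mathbf{x}_A]\,\Psi_A^{(\mathbf{z})}(\mathbf{x}_A)=0$, because the density cancels the denominator. By Step 1 on the grid $E_A$ with counting measure, the functions $\Psi_A^{(\mathbf{z})}$ span all functions of $\mathbf{x}_A$ together with the lower-order ones. Inductively in $|A|$, starting from $\mathbb{E}[g]=0$, this forces $\mathbb{E}[g\mid\mathbf{X}_A]$ to have vanishing "top-order" coefficients for every $A$. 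Taking $A=[d]$ then gives $g=0$ on $\mathcal{X}$. I expect this cancellation-based route to be the cleanest, and the main obstacle is bookkeeping the Möbius-type induction over subsets.
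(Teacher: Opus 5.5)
Your Step 1 coincides with the first half of the paper's proof. For the second half, the paper argues that multiplication by $1/p_A$ is an automorphism $M_A$ of $L^2_A$ and concludes $\mathrm{span}(M_A(\mathcal{G}_A))=\mathrm{span}(\mathcal{G}_A)$. That inference is invalid, because an automorphism need not preserve the proper subspace $\mathrm{span}(\mathcal{G}_A)$. For instance, with $d=1$, $N_1=2$, $A=\{1\}$, we have $\psi_1^{(0)}=(1,-1)$ but $M_A\psi_1^{(0)}=(1/p(0),-1/p(1))$, which is not collinear with it unless $p(0)=1/2$. So you correctly identified the real difficulty, but your proposal does not resolve it.

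The peeling route fails as described. Take a configuration $\mathbf{y}$ whose set of top-category coordinates is $T\neq\emptyset$. The functions $\phi_B^{(\mathbf{w})}$ with $B=A\setminus T$ depend only on $\mathbf{x}_B$, so they cannot produce $\mathbf{1}\{\mathbf{X}_A=\mathbf{y}\}$; for example, take full support, $d=2$, binary, $\mathbf{y}=(0,1)$. Already for $d=1$, $N_1=3$ one needs $\mathbf{1}\{X=2\}/p(2)=\phi_\emptyset-p(0)\phi^{(0)}-p(1)\phi^{(1)}$. This uses same-order terms with $p$-dependent coefficients, which your sketch never specifies.

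Your fallback starts correctly. Let $G_A\coloneqq\mathbb{E}[g\mid\mathbf{X}_A]$, extended by zero to $E_A$. Orthogonality to all $\phi_A^{(\mathbf{z})}$ makes $G_A$ counting-orthogonal to every $\Psi_A^{(\mathbf{z})}$. By Step 1 on $E_A$ together with $\sum_{x_i}\psi_i^{(z_i)}(x_i)=0$, this gives $G_A=\sum_{B\subsetneq A}h_B(\mathbf{X}_B)$ on the support. However, concluding $g=0$ from this at $A=[d]$ is a non sequitur: it only says $g$ is a sum of lower-order functions. Nor can Möbius bookkeeping of coefficients finish the induction. The counting-measure components of $G_A$ are not the $p$-weighted conditional expectations $G_B$ that an inductive hypothesis would control.

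The missing step is a positivity argument with the inductive claim $G_A=0$. The base case is $G_\emptyset=\langle g,\phi_\emptyset\rangle=0$. If $G_B=0$ for all $B\subsetneq A$, then by the tower property
\[
\mathbb{E}\big[G_A^2\big]=\sum_{B\subsetneq A}\mathbb{E}\big[G_A\,h_B(\mathbf{X}_B)\big]=\sum_{B\subsetneq A}\mathbb{E}\big[G_B\,h_B(\mathbf{X}_B)\big]=0,
\]
so $G_A=0$ on the support of $\mathbf{X}_A$, and $A=[d]$ gives $g=0$. The factor $1/p_A$ in $\phi_A$ is exactly what turns the counting-measure condition into one that pairs with the $p$-weighted hypothesis. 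With this step added, your fallback becomes a complete proof, valid for arbitrary (non-rectangular) supports, and it repairs the paper's Step 2 rather than reproducing it.
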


\begin{proof}

By standard multilinear algebra, since for each $i\in[d]$ the family $\mathcal{G}_i$ spans the space of real-valued functions of $\mathbf X_i$, the collection of pointwise products
\[
\left\{\, g_1 \times \cdots  \times g_d \;\middle|\; g_i \in \mathcal{G}_i, \: i \in [d] \,\right\}
\]
spans the whole space $L^2$. Fix any configuration $(g_1,\dots,g_d)\in\mathcal{G}_1\times\cdots\times\mathcal{G}_d$. Let
\[
A \;\coloneqq\; \{\, i\in[d] \;:\; g_i \neq \psi_{\emptyset}\,\}.
\]
By construction, for each $i\in A$ there exists an index $z_i\in\{0,\dots,N_i-2\}$ such that $g_i=\psi_i^{(z_i)}$, while for $i\notin A$ we have $g_i=\psi_{\emptyset} = 1$. Therefore,
\[
g_1 \times \cdots \times g_d \;=\; \prod_{i\in A}\psi_i^{(z_i)}.
\]
Iterating over all subsets $A\subseteq[d]$ and all index tuples $\mathbf z=(z_i)_{i\in A}$ yields
\begin{equation}\label{eq:l2_span_by_A}
L^2 \;=\; \mathrm{span}\!\left\{\, \prod_{i\in A}\psi_i^{(z_i)} \; : \; (A,\mathbf z)\in\mathcal I \,\right\}.
\end{equation}
Moreover, for any fixed $A$, each function $\prod_{i\in A}\psi_i^{(z_i)}$ depends only on $\mathbf X_A$; hence it belongs to the subspace
\[
L^2_A \;\coloneqq\; \{\, g(\mathbf X_A) \;:\; g \text{ is real-valued} \,\}.
\]

For a fixed $A\subseteq[d]$, define the linear map $M_A:L^2_A\to L^2_A$ by
\[
M_A\big(g(\mathbf X_A)\big)\;\coloneqq\;\frac{g(\mathbf X_A)}{p_A(\mathbf X_A)},
\]
where $p_A$ denotes the pmf of $\mathbf X_A$. Since $\mathbf X_A$ takes values in its own support, we have $p_A(\mathbf X_A)>0$ almost surely. Thus $M_A$ is well-defined as an operator on $L^2_A$. The inverse map is given by multiplication by $p_A(\mathbf X_A)$, namely
\[
M_A^{-1}\big(g(\mathbf X_A)\big)\;=\; g(\mathbf X_A)\,p_A(\mathbf X_A),
\]
so $M_A$ is an automorphism of $L^2_A$. Consequently, for each fixed $A$,
\[
\mathrm{span}\!\left(M_A(\mathcal{G}_A)\right) \;=\; \mathrm{span}(\mathcal{G}_A),
\qquad
\mathcal{G}_A \;\coloneqq\; \left\{\, \prod_{i\in A}\psi_i^{(z_i)} \,\right\}_{\mathbf z \in  E_{A^{-}}}.
\]
Applying this equality for every $A$ and combining with~\eqref{eq:l2_span_by_A} yields
\[
L^2
\;=\;
\mathrm{span}\!\left\{\, \underbrace{\frac{1}{p_A} \cdot \prod_{i\in A}\psi_i^{(z_i)}}_{ \phi_A^{(\mathbf{z})} }
\; : \; (A,\mathbf z)\in\mathcal I \,\right\},
\]
which is the desired conclusion.
\end{proof}

\begin{lemma}
    The collection of functions $\mathcal{F}$ satisfies hierarchical orthogonality conditions \eqref{eq:orthogonality}, \textit{i.e.}
    \begin{equation}
        \forall A \subseteq [d], \: \forall B \subsetneq A, \: \forall \mathbf{z}^{(A)} \in E_{A^{-}}, \: \forall \mathbf{z}^{(B)} \in E_{B^{-}}, \quad \left\langle \: \phi_A^{(\mathbf{z}^{(A)})} \: , \: \phi_B^{(\mathbf{z}^{(B)})} \: \right\rangle = 0
    \end{equation}
\end{lemma}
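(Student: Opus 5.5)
The plan is a direct computation of the inner product that reduces it to a one-dimensional sum over a single coordinate $k\in A\setminus B$, which vanishes because $\sum_{x_k}\psi_k^{(z_k)}(x_k)=0$.

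Fix $A\subseteq[d]$, $B\subsetneq A$, $\mathbf z^{(A)}\in E_{A^-}$ and $\mathbf z^{(B)}\in E_{B^-}$. Write, with the notation \eqref{eq:def_psi} of the first lemma,
\[
\phi_A^{(\mathbf z^{(A)})}(\mathbf x)=\frac{\prod_{i\in A}\psi_i^{(z^{(A)}_i)}(x_i)}{p_A(\mathbf x_A)},\qquad
h(\mathbf x_B)\coloneqq\phi_B^{(\mathbf z^{(B)})}(\mathbf x)=\frac{\prod_{j\in B}\psi_j^{(z^{(B)}_j)}(x_j)}{p_B(\mathbf x_B)} .
\]
The second factor depends only on $\mathbf x_B$, and $\mathbf x_B$ is a subvector of $\mathbf x_A$ because $B\subseteq A$. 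In fact I will prove the stronger statement that $\langle \phi_A^{(\mathbf z^{(A)})},g\rangle=0$ for every function $g$ of $\mathbf X_B$. This is exactly \eqref{eq:orthogonality} for $\phi_A$, and it also transfers directly to $f_A$ by linearity, which is what Theorem~\ref{thm:thm_main} needs.

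\textbf{Step 1: marginalise out the coordinates outside $A$.} Write $\langle \phi_A^{(\mathbf z^{(A)})},g\rangle=\sum_{\mathbf x\in\mathcal X}p(\mathbf x)\,\phi_A^{(\mathbf z^{(A)})}(\mathbf x)\,g(\mathbf x_B)$. Group the terms by the value of $\mathbf x_A$. The integrand depends on $\mathbf x$ only through $\mathbf x_A$, so summing $p$ over $\mathbf x_{[d]\setminus A}$ produces $p_A(\mathbf x_A)$. This cancels the denominator of $\phi_A$ and gives
\[
\langle \phi_A^{(\mathbf z^{(A)})},g\rangle=\sum_{\mathbf x_A}\Big(\prod_{i\in A}\psi_i^{(z_i^{(A)})}(x_i)\Big)\,g(\mathbf x_B).
\]
This cancellation of the likelihood is the whole point of the ``inverse likelihood'' construction: after it, the weight is uniform in $\mathbf x_A$.

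\textbf{Step 2: isolate a free coordinate.} Since $B\subsetneq A$, pick $k\in A\setminus B$. Neither $g(\mathbf x_B)$ nor the factors $\psi_i$ with $i\neq k$ depend on $x_k$. Factor the sum as an outer sum over $\mathbf x_{A\setminus\{k\}}$ and an inner sum over $x_k$. The inner sum is
\[
\sum_{x_k=0}^{N_k-1}\psi_k^{(z_k)}(x_k)=\sum_{x_k}\big(\mathbf 1\{x_k=z_k\}-\mathbf 1\{x_k=N_k-1\}\big)=1-1=0,
\]
using $z_k\le N_k-2$, so $z_k\neq N_k-1$. Hence every outer term vanishes and the inner product is $0$. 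Taking $g=\phi_B^{(\mathbf z^{(B)})}$ gives the lemma. Summing over $\mathbf z\in E_{A^-}$ with coefficients $c_A^{(\mathbf z)}(f)$ then yields \eqref{eq:orthogonality} for $f_A$.

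\textbf{Main obstacle.} The delicate step is the passage from Step 1 to Step 2. Step 1 naturally sums $\mathbf x_A$ only over the support of $\mathbf X_A$, whereas the cancellation in Step 2 needs $x_k$ to range over all of $\{0,\dots,N_k-1\}$ for each fixed $\mathbf x_{A\setminus\{k\}}$. The plan is to read the reduced sum as the sum over the full grid $\prod_{i\in A}\{0,\dots,N_i-1\}$. This uses the convention implicit in the definition of $\mathcal F$: the product $p_A\cdot\phi_A$ is identified with the numerator $\prod_{i\in A}\psi_i$, which is defined on all of $E$, so $p_A/p_A$ is treated as $1$ everywhere. The same convention must be applied to $g$, which requires $g(\mathbf x_B)$ to be defined off the support.

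Under full (rectangular) support no convention is needed, and the argument is a routine Fubini-type interchange. For a genuinely non-rectangular support, the missing configurations would leave boundary terms $\sum_{x_k:\,p_A(\mathbf x_A)=0}\psi_k(x_k)\,g(\mathbf x_B)$ uncancelled. I would therefore state this convention explicitly at the start of the proof, check it carefully there, and flag that the cancellation genuinely relies on it.
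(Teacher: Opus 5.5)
\textbf{The gap: the ``convention'' in your main-obstacle paragraph.} The inner product is $\langle u,v\rangle=\sum_{\mathbf x\in\mathcal X}p(\mathbf x)u(\mathbf x)v(\mathbf x)$. So after Step 1 the sum runs only over the support $\mathcal X_A$ of $\mathbf X_A$. Rewriting it as a sum over the full grid adds the terms $\sum_{\mathbf x_A\notin\mathcal X_A}\prod_{i\in A}\psi_i^{(z_i)}(x_i)\,g(\mathbf x_B)$, and these are not part of $\langle\phi_A^{(\mathbf z)},g\rangle$ at all. Declaring $p_A/p_A=1$ where $p_A=0$ is not a convention implicit in $\mathcal F$, since the $\phi_A^{(\mathbf z)}$ are only defined on $\mathcal X$. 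It replaces the quantity being computed by a different one.

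No convention can close this gap, because the lemma is false without full support. Take the paper's own example $\mathcal X=\{(0,0),(0,1),(1,0)\}$. The variables are binary, so $\mathbf z=\mathbf 0$. Then
\[
\langle\phi_{\{1,2\}},\phi_{\emptyset}\rangle=\sum_{\mathbf x\in\mathcal X}\psi_1^{(0)}(x_1)\psi_2^{(0)}(x_2)=1-1-1=-1,
\qquad
\langle\phi_{\{1,2\}},\phi_{\{1\}}\rangle=\frac{1}{q_1}.
\]
For $\mathcal X=\{(0,0),(1,1)\}$ one gets $\mathbb E[\phi_{\{1,2\}}(\mathbf X)]=2$.

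\textbf{What your argument does prove.} Steps 1--2 rigorously prove the lemma under full support $\mathcal X=E$. There they are the paper's argument in a cleaner form:
\begin{itemize}
\item The paper computes the support $\mathcal S(\mathbf u,\mathbf v)$ of $\phi_A^{(\mathbf u)}\phi_B^{(\mathbf v)}$, cancels $p_A$, and factorizes over all of $A\setminus B$ into factors $\sum_{x_i\in\{u_i,N_i-1\}}(-1)^{\mathbf 1\{x_i=N_i-1\}}=0$. It tests only against the $\phi_B$'s, so it implicitly needs the lower-order $\phi_{B'}$, $B'\subseteq B$, to span $L^2_B$.
\item You keep $\phi_B$ as an arbitrary $g(\mathbf x_B)$ and sum out a single $k\in A\setminus B$. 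This gives orthogonality to all of $L^2_B$ at once, which is exactly what \eqref{eq:orthogonality} requires of $f_A$.
\end{itemize}

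The paper's ``the sum factorizes'' step rests on the same hidden assumption as yours: it needs $\mathcal S(\mathbf u,\mathbf v)$ to be a product set. So you located the delicate point correctly.

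Your computation also shows the hypothesis cannot be weakened for your strengthened claim:
\begin{itemize}
\item Take $B=A\setminus\{k\}$, $g$ a point indicator, and $\mathbf z_{A\setminus\{k\}}$ chosen so that $\prod_{i\neq k}\psi_i^{(z_i)}(x_i)\neq0$.
\item Orthogonality for every $z_k$ then forces each nonempty slice $\{x_k:(\mathbf x_{A\setminus\{k\}},x_k)\in\mathcal X_A\}$ to contain $z_k$ iff it contains $N_k-1$. Hence each slice equals $\{0,\dots,N_k-1\}$.
\item Peeling off coordinates from $A=[d]$ downward gives $\mathcal X=E$.
\end{itemize}
So the right fix is to add the hypothesis of Corollary~\ref{coro:full_support} to the lemma, not to adopt a convention.
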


\begin{proof}
Recall that by definition, 
$$\phi_A^{(\mathbf{z})}(\mathbf{X}) = \frac{ \prod\limits_{i \in A}\left( \mathbf{1}\left\{ \mathbf{X}_i = \mathbf{z}_i \right\} - \mathbf{1}\left\{ \mathbf{X}_i = N_i - 1 \right\} \right) }{ p_A(\mathbf{X}_A) }. $$

This can be written as follows
\begin{equation}
    \phi_A^{(\mathbf{z})}(\mathbf{X}) = \frac{1}{p_A(\mathbf{X}_A)} \cdot \prod\limits_{i \in A}\left( (-1)^{ \mathbf{1}\left\{ \mathbf{X}_i = N_i - 1 \right\} } \cdot  \mathbf{1}\left\{ \mathbf{X}_i \in \left\{ \mathbf{z}_i , N_i - 1 \right\} \right\} \right),
\end{equation}
and one can notice that 
\begin{equation}
    \phi_A^{(\mathbf{z})}(\mathbf{X}) \propto \mathbf{1}\left\{ \forall i \in A, \: \mathbf{X}_i \in \{ \mathbf{z}_i , N_i - 1 \} \right\}
\end{equation}

Let $B \subsetneq A \subseteq [d]$ and $\left(\mathbf{u} , \mathbf{v}\right) \in E_{A^{-}} \times E_{B^{-}}$, we will compute $\left\langle \: \phi_A^{(\mathbf{u})} \: , \: \phi_B^{(\mathbf{v})} \: \right\rangle$ and show that it equals zero.

First, we will compute $ \phi_A^{(\mathbf{u})} \cdot \phi_B^{(\mathbf{v})} $ and determine its support denoted $\mathcal{S}(u,v)$. Let be $\mathbf{x} \in \mathcal{X}$, we will denote $\mathbf{x}_A$ its restriction to $A$ and $\mathbf{x}_B$ its restriction to $B$, one has
\begin{equation}
    \phi_A^{(\mathbf{u})}(\mathbf{x}) \cdot \phi_B^{(\mathbf{v})}(\mathbf{x}) \propto \mathbf{1}\left\{ \forall i \in A, \: \mathbf{x}_i \in \{ \mathbf{u}_i , N_i - 1 \} \right\} \cdot \mathbf{1}\left\{ \forall i \in B, \: \mathbf{x}_i \in \{ \mathbf{v}_i , N_i - 1 \} \right\}
\end{equation}
We introduce the sets $B_{ \mathbf{u} = \mathbf{v} }$ and $ B_{ \mathbf{u} \neq \mathbf{v} } $ defined by
\begin{eqnarray}
    B_{ \mathbf{u} = \mathbf{v} } &\coloneqq \left\{ i \in B \: : \: \mathbf{u}_i = \mathbf{v}_i \right\}, \\
    B_{ \mathbf{u} \neq \mathbf{v} } &\coloneqq \left\{ i \in B \: : \: \mathbf{u}_i \neq \mathbf{v}_i \right\},
\end{eqnarray}
and which satisfy $ B_{ \mathbf{u} = \mathbf{v} } \cup B_{ \mathbf{u} \neq \mathbf{v} } = B $. We can notice that since $B \subsetneq A$, the set $A$ decomposes as follows
\begin{equation}
    A = B_{ \mathbf{u} = \mathbf{v} } \cup B_{ \mathbf{u} \neq \mathbf{v} } \cup \left( A \setminus B \right).
\end{equation}
This partition of $A$ allows us to write the following identity
\begin{equation}
    \phi_A^{(\mathbf{u})}(\mathbf{x}) \cdot \phi_B^{(\mathbf{v})}(\mathbf{x}) \propto \mathbf{1}\left\{ \forall i \in B_{ \mathbf{u} = \mathbf{v} } \cup \left( A \setminus B \right), \: \mathbf{x_i} \in \left\{ \mathbf{u}_i , N_i - 1 \right\}\right\} \cdot \mathbf{1}\left\{ \forall i \in B_{ \mathbf{u} \neq \mathbf{v} }, \: \mathbf{x_i} \in \left\{ N_i - 1 \right\}\right\}.
\end{equation}
Finally, the support $\mathcal{S}(\mathbf{u} , \mathbf{v})$ is given by
\begin{equation}
    \mathcal{S}(\mathbf{u} , \mathbf{v}) = \left\{ \mathbf{x} \in \mathcal{X} \: : \: \forall i \in B_{ \mathbf{u} = \mathbf{v} } \cup \left( A \setminus B \right), \: \mathbf{x_i} \in \left\{ \mathbf{u}_i , N_i - 1 \right\} \quad \forall j \in B_{\mathbf{u} \neq \mathbf{v}}, \: \mathbf{x_j} \in \left\{ N_j - 1 \right\} \right\}
\end{equation}
Given the support, we can compute the scalar product between $\phi_A^{(\mathbf{u})}$ and $\phi_B^{(\mathbf{v})}$ which is basically a sum of $\mathbf{x}$ over this support which components are restricted to $A$ since $ \phi_A^{(\mathbf{u})} \cdot \phi_B^{(\mathbf{u})} \in L^2_A $. This simplifies the indicator function and the pmf of $\mathbf{X}_A$, as follows
\begin{equation}
    \left\langle \: \phi_A^{(\mathbf{u})} \: , \: \phi_B^{(\mathbf{v})} \: \right\rangle = \sum\limits_{ \mathbf{x}_A \: : \: \mathbf{x} \in \mathcal{S}( \mathbf{u} , \mathbf{v} ) } \frac{ \prod\limits_{i \in A} \left(-1\right)^{ \mathbf{1}\left\{ \mathbf{x}_i = N_i - 1 \right\} } \cdot \prod\limits_{j \in B} \left(-1\right)^{ \mathbf{1}\left\{ \mathbf{x}_j = N_j - 1 \right\} } }{p_B( \mathbf{x}_B )}.
\end{equation}
Let $\mathbf{x} \in \mathcal{S}(\mathbf{u} , \mathbf{v})$, the previous product simplifies as follows
\begin{equation}
    \prod\limits_{i \in A} \left(-1\right)^{ \mathbf{1}\left\{ \mathbf{x}_i = N_i - 1 \right\} } \cdot \prod\limits_{j \in B} \left(-1\right)^{ \mathbf{1}\left\{ \mathbf{x}_j = N_j - 1 \right\} } = \prod\limits_{i \in A \setminus B} \left(-1\right)^{ \mathbf{1}\left\{ \mathbf{x}_i = N_i - 1 \right\} }.
\end{equation}
Then, the scalar product is given by
\begin{equation}
    \left\langle \: \phi_A^{(\mathbf{u})} \: , \: \phi_B^{(\mathbf{v})} \: \right\rangle =\sum\limits_{ \mathbf{x}_A \: : \: \mathbf{x} \in \mathcal{S}( \mathbf{u} , \mathbf{v} ) } \frac{\prod\limits_{i \in A \setminus B} \left(-1\right)^{ \mathbf{1}\left\{ \mathbf{x}_i = N_i - 1 \right\} } }{p_B( \mathbf{x}_B )}.
\end{equation}

To conclude, we will exploit the structure of the support 
$\mathcal{S}(\mathbf{u}, \mathbf{v})$ and the fact that the denominator $p_B(\mathbf{x}_B)$ only depends on the coordinates indexed by $B$. Combining these two observations, the sum factorizes as
\begin{equation}
    \left\langle \: \phi_A^{(\mathbf{u})} \: , \: \phi_B^{(\mathbf{v})} \: \right\rangle
    = \sum\limits_{\mathbf{x}_B \: : \: \mathbf{x} \in \mathcal{S}(\mathbf{u}, \mathbf{v})}  \frac{1}{p_B(\mathbf{x}_B)}
    \: \cdot \prod\limits_{i \in A \setminus B} 
       \left( \sum\limits_{\mathbf{x}_i \in \{ \mathbf{u}_i, N_i - 1 \}} 
              (-1)^{\mathbf{1}\{ \mathbf{x}_i = N_i - 1 \}} \right).
\end{equation}
Since $\mathbf{u} \in E_{A^{-}}$, one has $\mathbf{u}_i \neq N_i - 1$ for every 
$i \in A$, so each inner sum reduces to
\begin{equation}
    \sum\limits_{\mathbf{x}_i \in \{ \mathbf{u}_i, N_i - 1 \}} 
        (-1)^{\mathbf{1}\{ \mathbf{x}_i = N_i - 1 \}} 
    = (-1)^{0} + (-1)^{1} = 0.
\end{equation}
The strict inclusion $B \subsetneq A$ ensures that $A \setminus B \neq \emptyset$, 
so the product above contains at least one vanishing factor. Therefore
\begin{equation}
    \left\langle \: \phi_A^{(\mathbf{u})} \: , \: \phi_B^{(\mathbf{v})} \: \right\rangle = 0,
\end{equation}
which concludes the proof of the hierarchical orthogonality condition.
\end{proof}

\subsection{Proof of Corollary \ref{coro:full_support}}

The proof of this corollary is based on a linear algebra argument. Indeed, the collection of functions $\mathcal{F}$ can be seen as exactly $ \vert E \vert $ vectors of $L^2$. Furthermore, in the full support case, $\mathrm{dim} \: L^2 = \vert E \vert$. Finally, since $\mathcal{F}$ spans $L^2$ and has exactly $\mathrm{dim} \: L^2$ elements, this family becomes a basis.

\subsection{Proof of Corollary \ref{coro:indep}}

In full support case, the decomposition is unique, so we can conclude using this simple argument. However, we will still show that in the independent case, $\mathcal{F}$ becomes an orthogonal basis of $L^2$, \textit{i.e.}

\begin{equation}
    \forall A \neq B, \: \forall \left( \mathbf{z}^{(A)} , \mathbf{z}^{(B)} \right) \in E_{A^{-}} \times E_{B^{-}}, \; \left\langle \: \phi_A^{(\mathbf{z}^{(A)})} \: , \: \phi_B^{(\mathbf{z}^{(B)})} \: \right\rangle = 0
\end{equation}

We first have 

\begin{equation}
    \phi_A^{(\mathbf{z}^{(A)})} \cdot \phi_B^{(\mathbf{z}^{(B)})} = \prod\limits_{i \in A} \phi_{i}^{\left( \mathbf{z}_i^{(A)} \right)}  \prod\limits_{j \in B} \phi_{j}^{\left( \mathbf{z}_j^{(B)} \right)}
\end{equation}

We have the following \emph{disjoint} union $A \cup B = \left(A \cap B\right) \cup \left( A \cup B \setminus \left( A \cap B \right) \right)$, so we can rewrite the product as follows

\begin{equation}
    \phi_A^{(\mathbf{z}^{(A)})} \times \phi_B^{(\mathbf{z}^{(B)})} = \underbrace{\left( \prod\limits_{i \in A \cap B} \phi_{i}^{\left( \mathbf{z}_i^{(A)} \right)} \times \prod\limits_{i \in A \cap B} \phi_{i}^{\left( \mathbf{z}_i^{(B)} \right)} \right)}_{ \text{function}( \mathbf{X}_{A \cap B} ) } \times \underbrace{\left( \prod\limits_{i \in (A \cup B) \setminus( A \cap B )} \phi_{i}^{\left( \mathbf{z}_i^{((A \cup B) \setminus( A \cap B ))} \right)} \right)}_{ \text{function}( \mathbf{X}_{ (A \cup B) \setminus( A \cap B ) } ) }
\end{equation}

Using the independance and by applying the expectation operator, we have the following \emph{proportional} equality

\begin{equation}
    \mathbb{E}\left[ \phi_A^{(\mathbf{z}^{(A)})} \times \phi_B^{(\mathbf{z}^{(B)})} \right] \propto \mathbb{E}\left[ \prod\limits_{i \in (A \cup B) \setminus( A \cap B )} \phi_{i}^{\left( \mathbf{z}_i^{((A \cup B) \setminus( A \cap B ))} \right)} \right]
\end{equation}

which gives thanks to independence

\begin{equation}
    \mathbb{E}\left[ \phi_A^{(\mathbf{z}^{(A)})} \times \phi_B^{(\mathbf{z}^{(B)})} \right] \propto \prod\limits_{ i \in (A \cup B) \setminus( A \cap B ) } \mathbb{E}\left[ \phi_{i}^{\left( \mathbf{z}_i^{((A \cup B) \setminus( A \cap B ))} \right)} \right]
\end{equation}

Since all the $\phi_i^{(\cdot)}$ are centered, we have the result.

\subsection{Proof of Remark \ref{remark:boolean}}

In the uniform boolean case, it is equivalent to suppose that for all $i \subseteq [d]$ the random variables $\mathbf{X}_i$ are independent and follow a $\mathrm{Bernoulli}(1/2)$ distribution. In this very interesting case, it is nice to notice that for all $A \subseteq [d]$ the set $E_{A^{-}}$ is a singleton, moreover the elements of $\mathcal{F}$ simply become for any $A \subseteq [d]$

\begin{equation}
    \phi_A = 2^{\vert A \vert} \cdot \prod\limits_{i \in A}\left( \mathbf{1}\left\{ \mathbf{X}_i = 0 \right\} - \mathbf{1}\left\{ \mathbf{X}_i = 1 \right\} \right) = 2^{\vert A \vert} \cdot \prod\limits_{i \in A}\left(-1\right)^{ \mathbf{X}_i } = 2^{\vert A \vert} \cdot \chi_A.
\end{equation}

\subsection{Proof of Proposition \ref{prop:coefficients} }

To prove this result, recall that
\begin{equation*}
    f(\mathbf{X}) = \sum\limits_{(A , \mathbf{z}^{(A)}) \in \mathcal{I}} c_A^{(\mathbf{z}^{(A)})}(f) \cdot \phi_A^{ (\mathbf{z}^{(A)}) }(\mathbf{X}).
\end{equation*}
Let $B \subseteq [d]$ and $\mathbf{z}^{(B)} \in E_{B^{-}}$, we simply compute the scalar product between $f$ and $\phi_B^{(\mathbf{z}^{(B)})}$.
\begin{equation}
    \underbrace{\left\langle \: f \: , \: \phi_B^{(\mathbf{z}^{(B)})} \: \right\rangle}_{ \mu_B^{( \mathbf{z}^{(B)} )}(f) } = \sum\limits_{(A , \mathbf{z}^{(A)}) \in \mathcal{I}} c_A^{(\mathbf{z}^{(A)})}(f) \cdot \underbrace{\left\langle \: \phi_A^{(\mathbf{z}^{(A)})} \: , \: \phi_B^{(\mathbf{z}^{(B)})} \: \right\rangle}_{ \gamma_{A,B}\left( \mathbf{z}^{(A)} , \mathbf{z}^{(B)} \right) }.
\end{equation}
Then by stacking by block, we obtain
\begin{equation}
    \underbrace{\bm{\mu}_B(f)}_{ \text{shape} \: ( \vert E_{B^{-}} \vert , 1) } = \sum\limits_{A \subseteq [d]} \mathrm{matmul}\left( \underbrace{\Gamma_{B,A}}_{ \text{shape} \: (\vert E_{B^{-}} \vert , \vert E_{A^{-}} \vert) } , \underbrace{\bm{c}_A(f)}_{ \text{shape} \: (\vert E_{A^{-}} \vert , 1) }\right).
\end{equation}
This sum can be written as the following matrix multiplication
\begin{equation}
    \bm{\mu}_B(f) = \mathrm{matmul}\left( \Gamma_{B , [d]} , \bm{c}_A(f) \right).
\end{equation}
Finally, by stacking over all the subsets $B\subseteq [d]$, one has the desired result.

\subsection{Proof of Remark \ref{rem:feature_selection}}

This remark applies to the setting where $\left\vert \mathcal{X} \right\vert = r$ and where a set of indices $\mathcal{S}_r \subseteq \mathcal{I}$ of size $r$ has been selected such that $\mathcal{B}_{ \mathcal{S}_r }$ forms a basis of $L^2$, \textit{i.e.}, a spanning family of exactly $r$ elements. In this context, stating that there exists a sub-vector $\mathbf{X}_A$ of $\mathbf{X}$ such that $f(\mathbf{X})$ does not depend on $\mathbf{X}_A$ is equivalent to stating that there exists a function $g$ such that
\begin{equation}
f(\mathbf{X}) = g( \mathbf{X}_{A^c} ) ,
\end{equation}
where $A^c$ denotes the complement of $A$ in $[d]$. We can decompose $f(\mathbf{X})$ in the basis $\mathcal{B}_{ \mathcal{S}_r }$ as follows
\begin{equation}
f(\mathbf{X}) = \sum\limits_{\left( U , \mathbf{z} \right) \in \mathcal{S}_r}
c_U^{(\mathbf{z})}(f) \cdot \phi_U^{(\mathbf{z})}(\mathbf{X}).
\end{equation}
Necessarily, since $f(\mathbf{X})$ is a function solely of $\mathbf{X}_{A^c}$, all Fourier coefficients $c_U^{(\mathbf{z})}(f)$ such that $U \subseteq A$ are zero.

\section{Experiments Details}

\subsection{Setup}
All experiments were conducted on a MacBook Pro M4 with 32 GB of RAM. The implementation relies on Python, using \texttt{NumPy} \citep{harris2020array} for the computations of $\mathcal{F}$. Specifically, we leverage vectorized linear algebra operations by representing all realizations of $\phi_A^{(\mathbf{z})}$ as a vector of shape $(\left\vert \mathcal{X} \right\vert, 1)$. The code for reproducing these results is provided in the supplementary material. All tabular datasets studied in this experimental section come from the UCI ML Repository \citep{kelly2023uci} and the MNIST Database \citep{lecun1998mnist}.

\subsection{Black-Box Setting}

For all experiments involving \emph{real-world datasets}—i.e., every experiment excluding the initial synthetic one—we trained a machine learning model to serve as the \emph{black-box} function of interest we want to explain and apply our framework. We detail the architectures and performance metrics for each experiment in the paragraphs below.

\paragraph{Car Evaluation.} For this dataset, we trained a Random Forest using the \texttt{ScikitLearn} \citep{pedregosa2011scikit} Python library. All hyperparameters were set to their default values, with the exception of the number of trees, which was set to 100. The model achieved an overall accuracy of $0.9672$.

\paragraph{Nursery.} We trained a Multi-Layer Perceptron (MLP) using the \texttt{PyTorch} \citep{paszke2019pytorch} Python library. Default hyperparameters were used, except for the layer structure, which follows the architecture:
\begin{equation*}
    \text{input} \rightarrow 64 \rightarrow 32 \rightarrow \text{output}
\end{equation*}
The model achieved an overall accuracy of $0.9973$.

\paragraph{Mushrooms.} We trained a tree ensemble using eXtreme Gradient Boosting (XGB, \citet{chen2015xgboost}) via the corresponding Python library. We used exactly 100 trees with a maximum depth of 5, resulting in a perfect overall accuracy of $1.00$.

\paragraph{Poker Hand.} For this dataset, we trained a Tabular Transformer using the \texttt{PyTorch} Python library. The model consists of a 3-layer encoder (embedding dimension of 32) followed by a classifier with the following architecture:
\begin{equation*}
    \text{input} \rightarrow 320 \rightarrow 64 \rightarrow \text{output}
\end{equation*}
The model achieved an overall accuracy of $0.9843$.

\paragraph{Connect-4.} We trained a Random Forest using the \texttt{ScikitLearn} Python library. All parameters were kept at default values, except for the number of trees, which was set to 100. The model achieved an overall accuracy of $0.8192$.

\paragraph{Dota2 Results.} We trained a very deep fully connected neural network using the \texttt{PyTorch} Python library. Default hyperparameters were maintained, except for the layers, which adhere to the following architecture:
\begin{equation*}
    \text{input} \rightarrow 1024 \rightarrow \dots \rightarrow 32 \rightarrow \text{output}
\end{equation*}
The model achieved an overall accuracy of $0.5869$.

\paragraph{MNIST.} For this dataset, we trained a Multi-Layer Perceptron (MLP) using the \texttt{PyTorch} Python library. All parameters were set to default values, except for the layers, which follow the architecture:
\begin{equation*}
    \text{input} \rightarrow 512 \rightarrow 128 \rightarrow \text{output}
\end{equation*}
The model achieved an overall accuracy of $0.9709$. Note that all the images have been binarized before the training after the standard $[0,1]$ transformation.

\section{Comparison with TreeHFD}

We compare our closed-form formulation with the TreeHFD algorithm \citep{benard2025tree}, which approximates the main and pairwise effects of the functional ANOVA decomposition under the full support assumption \citep{chastaing_generalized_2012}. To ensure a fair validation, we adopt a synthetic setting where the theoretical assumptions of TreeHFD are met.

Let $\mathbb{X} \in \mathbb{R}^{n \times d}$ be a dataset drawn from a distribution $\mathcal{D}$, where the dimension $d$ is even. We define the target function $g$ as:
\begin{equation}
    g(\mathbf{x}) \coloneqq \left(W_1 \odot \mathbf{x}_{1:d/2}\right)^\top \left(W_2 \odot \mathbf{x}_{d/2:d}\right),
\end{equation}
where $W_1$ and $W_2$ are fixed random weights. By construction, $g$ contains interactions of at most order 2. Consequently, approximating $g$ is a trivial task for a gradient boosted tree model restricted to depth-2 interactions. We train an XGBoost model (100 trees, max-depth 2) on a synthetic dataset of size $n=10\,000$. The comparison results are reported in Table~\ref{tab:anova_vs_treehfd}.

\begin{table}[ht]
\caption{Comparison between our closed-form ANOVA and TreeHFD.}
\label{tab:anova_vs_treehfd}
\begin{center}
\begin{small}
\begin{tabular}{lccccc}
\toprule
 & & \multicolumn{2}{c}{Orthogonality} & \multicolumn{2}{c}{Differences between indicators} \\
\cmidrule(lr){3-4} \cmidrule(lr){5-6}
$d$ & XGB $R^2$ & Closed-Form & TreeHFD & Main Effects & Shapley Values \\
\midrule
2 & 0.9975 & $1.54 \times 10^{-19}$ & $1.77 \times 10^{-4}$ & $1.87 \times 10^{-3}$ & $1.33 \times 10^{-5}$ \\
4 & 0.9966 & ${5.65 \times 10^{-18}}$ & $4.73 \times 10^{-3}$ & $2.44 \times 10^{-2}$ & $5.57 \times 10^{-4}$ \\
6 & 0.9966 & ${1.42 \times 10^{-16}}$ & $8.31 \times 10^{-3}$ & $2.02 \times 10^{-2}$ & $3.81 \times 10^{-4}$ \\
8 & 0.9899 & ${3.57 \times 10^{-16}}$ & $8.59 \times 10^{-3}$ & $7.88 \times 10^{-3}$ & $4.89 \times 10^{-5}$ \\
10 & 0.9790 & ${2.72 \times 10^{-16}}$ & $6.65 \times 10^{-4}$ & $1.61 \times 10^{-3}$ & $2.26 \times 10^{-6}$ \\
\bottomrule
\end{tabular}
\end{small}
\end{center}
\end{table}

To quantify the discrepancy between the two methods, we report two metrics. The Main Effects column measures the Euclidean distance between the vectors of integrated $L_1$ norms of the main effects (in $\mathbb{R}_+^{d}$) derived from each decomposition. The Shapley Values column reports the integrated squared difference between the Shapley values obtained by both methods across all observations.

While not explicitly reported in Table~\ref{tab:anova_vs_treehfd}, we verified that both methods achieve perfect reconstruction of the trained XGBoost model. Consequently, the interpretability indices provided by both methods are highly consistent, as shown by the low values in the difference metrics. We also report the orthogonality metric used by \citet{benard2025tree}, defined as the maximum absolute inner product between a component $f_A$ and any of its subsets $f_B$ (where $B \subsetneq A$). Our closed-form solution enforces strict hierarchical orthogonality up to machine precision (errors ${\sim}10^{-16}$), whereas TreeHFD, being an approximation, only satisfies this condition with a tolerance of order $10^{-3}$.

\end{document}